\newcommand{\tensor}{\otimes}
\newcommand{\Ex}{\mathbf{E}}
\newtheorem{lem}{Lemma}[section]
\newtheorem{thm}[lem]{Theorem}
\newtheorem{cor}[lem]{Corollary}
\newtheorem{claim}[lem]{Claim}
\newcommand{\colideset}[1]{\mathcal{E}(#1)}
\newcommand{\Hyp}{\mathcal{H}}
\newcommand{\one}{\mathbbm{1}}
\newcommand{\eqdef}{\stackrel{def}=}
\newcommand{\N}{\mathcal{N}}
\renewcommand{\Re}{\mathbb{R}}
\newcommand{\eps}{\varepsilon}
\newcommand{\doh}[2]{\frac{\partial #1}{\partial #2}}
\newcommand{\myOr}{{\bigvee}}
\newcommand{\myAnd}{{\bigwedge}}
\newcommand{\inputdim}{d}
\newcommand{\projdim}{d'}
\newcommand{\firstlayer}{p}
\newcommand{\margin}{\gamma}
\newcommand{\sparsity}{k}
\newcommand{\polynomialsparsity}{s}
\newcommand{\polynomialdegree}{p}
\newcommand{\cee}{c}
\newcommand{\x}{\mathbf x}
\newcommand{\y}{\mathbf y}
\newcommand{\z}{\mathbf z}
\newcommand{\w}{\mathbf w}
\newcommand{\e}{\mathbf{e}}
\newcommand{\numhashes}{t}
\newcommand{\rangesize}{m}
\newcommand{\relu}{\operatorname*{Relu}}
\newcommand{\supp}{{\cal S}}
\newcommand{\sepsparsity}{s}
\def\abovestrut#1{\rule[0in]{0in}{#1}\ignorespaces}
\def\belowstrut#1{\rule[-#1]{0in}{#1}\ignorespaces}
\def\abovespace{\abovestrut{0.20in}}
\def\belowspace{\belowstrut{0.10in}}
\title{Sketching and Neural Networks}
\author{
\vspace{1cm}
  Amit Daniely\thanks{Email: amitdaniely@google.com} \and
  Nevena Lazic\thanks{Email: nevena@google.com} \and
  Yoram Singer\thanks{Email: singer@google.com} \and
  Kunal Talwar\thanks{Email: kunal@google.com. Author for correspondences.}
}
\begin{document}

\maketitle
\thispagestyle{empty}

\begin{abstract}
High-dimensional sparse data present computational and statistical
challenges for supervised learning.  We propose compact linear sketches for
reducing the dimensionality of the input, followed by a single layer neural
network. We show that any sparse polynomial function can be computed, on
nearly all sparse binary vectors, by a single layer neural network that takes a
compact sketch of the vector as input. Consequently, when a set of sparse
binary vectors is approximately separable using a sparse polynomial, there
exists a single-layer neural network that takes a short
sketch as input and correctly classifies nearly all the points. 
Previous work has proposed using sketches to reduce dimensionality while preserving
 the hypothesis class. However, the sketch size has an
exponential dependence on the degree in the case of polynomial classifiers.
In stark contrast, our approach of using improper learning, using a larger hypothesis class
 allows the sketch size to have a logarithmic
dependence on the degree. Even in the linear case, our approach allows us to
improve on the pesky $O({1}/{{\margin}^2})$ dependence of random
projections.  We empirically show that our approach leads to more compact
neural networks than related methods such as feature hashing at equal
or better performance.
\end{abstract}

\setcounter{page}{1}

\newpage

\section{Introduction}

In many supervised learning problems, input data are high-dimensional and
sparse. The high dimensionality may be inherent in the task, for example a
large vocabulary in a language model, or the result of creating hybrid
conjunction features. Applying standard supervised learning techniques to such
datasets poses statistical and computational challenges, as high dimensional
inputs lead to models with a very large number of parameters.  For example, a
linear classifier for $\inputdim$-dimensional inputs has  $\inputdim$
weights and a linear multiclass predictor for $\inputdim$-dimensional vectors has 
has $\inputdim$ weights per class. In the case of a neural network with
$\firstlayer$ nodes in the first hidden layer, we get $\inputdim\firstlayer$
parameters from this layer alone.  Such large models can often lead to slow
training and inference, and may require larger datasets to ensure low generalization error.

One way to reduce model size is to project the data into a lower-dimensional
space prior to learning. Some of the proposed methods for reducing dimensionality are
random projections, hashing, and principal component analysis (PCA).
These methods typically attempt to project the data in a way that preserves
the hypothesis class.  While effective, there are inherent
limitations to these approaches. For example, if the data consists of linearly
separable unit vectors, Arriaga and Vempala~\cite{ArriagaV2006} show that
projecting data into $O(1/\margin^2)$ dimensions suffices to
preserve linear separability, if the original data had a margin $\gamma$. 
However, this may be too large for a small
margin $\margin$. Our work is motivated by the question: 
{\it could fewer dimensions suffice?} Unfortunately, it can be shown that
$\Omega(1/\margin^2)$ dimensions are needed in order to preserve
linear separability, even if one can use arbitrary embeddings (see Section~\ref{sec:lbproper}). 
It would appear therefore that the answer is a
resounding no.

In this work, we show that using improper learning, allowing for a slightly larger hypothesis class
allows us to get a positive answer. In the simplest case, we show that for
linearly separable $\sparsity$-sparse inputs, one can create a  $O(\sparsity
\log\frac \inputdim \delta)$-dimensional sketch of each input and guarantee
that a neural network with a single hidden layer taking the sketches as input, can correctly classify
$1\!-\!\delta$ fraction of the inputs. We show
that a simple non-linear function can ``decode'' every binary feature of the
input from the sketch. Moreover, this function can be implemented by applying
a simple non-linearity to a linear function. In the case of $0$-$1$ inputs, one can use a rectified linear unit (ReLU) that
is commonly used in neural networks. Our results also extend to sparse polynomial
functions.

In addition, we show that this can be achieved with sketching matrices that
are sparse. Thus the sketch is very efficient to compute and does not
increase the number of non-zero values in the input by much.  This result is in
contrast to the usual dense Gaussian projections. In fact, our sketches are
simple linear projections. Using a non-linear ``decoding'' operation allows us to improve on previous work.

We present empirical evidence that our approach leads to more compact  neural networks than existing methods such as feature hashing and Gaussian random projections. We leave open the question of how such sketches affect the difficulty of learning such networks.

In summary, our contributions are:
\begin{itemize}
  \item We show that on sparse binary data, any linear function is computable on most inputs by a small single-layer neural network on a compact linear sketch of the data.
  \item We show that the same result holds also for sparse polynomial functions with only a logarithmic dependence on the degree of the polynomial, as opposed to the exponential dependence of previous work. To our knowledge, this is the first technique that provably works with such compact sketches.
  \item We empirically demonstrate that on synthetic and real datasets, our approach leads to smaller models, and in many cases, better accuracy. The practical message stemming from our work is that a compact sketch built from multiple hash-sketches, in combination with a neural network, is sufficient to learn a small accurate model.
  \end{itemize}

\section{Related Work}

\paragraph{Random Projections.} Random Gaussian projections are by now a
classical tool in dimensionality reduction.  For general vectors, the
Johnson-Lindenstrauss Lemma~\cite{Johnson1984} implies that random
Gaussian projection into $O(\log(1/\delta)/\eps^2)$ dimensions preserves the inner product between a pair of unit
vectors up to an additive factor $\eps$, with probability $1\!-\!\delta$. A long line of work has sought
sparser projection matrices with similar guarantees; see ~\cite{Achlioptas2003,
AilonC2009, Matousek2008, DasguptaKS2010, BravermanOR2010, KaneN2014,
ClarksonW2013}.

\paragraph{Sketching.} Research in streaming and sketching algorithms has
addressed related questions. Alon et al.~\cite{AlonMS1999} showed a simple hashing
based algorithm to get unbiased estimators for the Euclidean norm in the
streaming setting. Charikar et al.~\cite{CharikarCF2004} showed an algorithm for the
heavy-hitters problem based on the {\em Count Sketch}. Most relevant to our
works is the {\em Count-min sketch} of Cormode and Muthukrishnan~\cite{CormodeM2005a, CormodeM2005b}.

\paragraph{Projections in Learning.}
Random projections have been used in machine learning at least since the work of Arriaga
and Vempala~\cite{ArriagaV2006}. For fast estimation of a certain class of kernel
functions, sampling has been proposed as a dimensionality reduction
technique in~\cite{Kontorovich2007} and~\cite{RahimiR2007}.
Shi et al.~\cite{Shietal2009} propose using a count-min sketch to reduce
dimensionality while approximately preserving inner products for sparse
vectors. Weinberger et al.~\cite{Weinbergeretal2009} use the count-sketch to get an
unbiased estimator for the inner product of sparse vectors and prove strong
concentration bounds. Previously, Ganchev and Dredze~\cite{GanchevD2008} had shown
empirically that hashing is effective in reducing model size without
significantly impacting performance. Hashing had also been used in Vowpal
Wabbit~\cite{LangfordLS2007}. Talukdar and Cohen~\cite{TalukdarC2014} also use the
count-min sketch in graph-based semi-supervised learning. Pham and Pagh~\cite{PhamP2013} showed that a count sketch of a tensor power of a
vector could be quickly computed without explicitly computing the tensor
power, and applied it to fast sketching for polynomial kernels.

\paragraph{Compressive Sensing.}
Our works is also related to the field of Compressive Sensing. For
$\sparsity$-sparse vectors, results in this area, see
e.g.~\cite{Donoho2006,CandesTa06}, imply that a $\sparsity$-sparse vector
$\x \in\Re^{\inputdim}$ can be reconstructed, w.h.p. from a projection of
dimension $O(\sparsity\ln \frac \inputdim \sparsity)$. However, to our
knowledge, all known decoding algorithms with these parameters involve
sequential adaptive decisions, and are not implementable by a low depth
neural network. Recent work by Mousavi et al.~\cite{MousaviPB2015}
empirically explores using a deep network for decoding in compressive
sensing and also considers learnt non-linear encodings to adapt to the
distribution of inputs.

\paragraph{Parameter Reduction in Deep Learning.}
Our work can be viewed as a method for reducing the number of parameters in
neural networks. Neural networks have become ubiquitous in many machine
learning applications, including speech recognition, computer vision, and
language processing tasks(see ~\cite{Hinton2012,Krizhevsky2012,Sermanet2013,Vinyals2014} for a few notable examples).  These successes
have in part been enabled by recent advances in scaling up deep networks,
leading to models with millions of parameters~\cite{Dean2012,Krizhevsky2012}. However, a drawback of such large models is
that they are very slow to train, and difficult to deploy on mobile and
embedded devices with memory and power constraints.
Denil et al.~\cite{Denil2013} demonstrated significant redundancies in the
parameterization of several deep learning architectures. They reduce the
number of parameters by training low-rank decompositions of weight matrices.
Cheng et al.~\cite{Cheng2015} impose circulant matrix structure on fully connected
layers. Ba and Caruana~\cite{Ba2014} train shallow networks to predict the log-outputs
of a large deep network, and Hinton et al.~\cite{HintonVD2015} train a small network
to match smoothed predictions of a complex deep network or an ensemble of
such models. Collines and Kohli~\cite{Collins2014} encourage zero-weight connections using
sparsity-inducing priors, while others such as~\cite{Lecun1989,Hassibi1993,Han2015}
use techniques for pruning weights.  HashedNets~\cite{Chen2015} enforce
parameter sharing between random groups of network parameters.  In contrast
to these methods, sketching only involves applying a sparse, linear
projection to the inputs, and does not require a specialized learning
procedure or network architecture.

\section{Notation} For a vector $\x \in \Re^\inputdim$, the support of $\x$
is denoted $\supp(\x) = \{i : x_i \neq 0\}$. The $p$-norm of $\x$ is denoted
$\|\x\|_p = \left(\sum_i \|x_i\|^p\right)^{\frac 1 p}$ and $\|\x\|_0 =
|\supp(\x)|$. We denote by $$ head_k(\x) =
\operatorname*{arg\,min}_{\displaystyle \y : \|\y\|_0 \leq \sparsity}
\|\x-\y\|_1 $$ the closest vector to $\x$ whose support size is $\sparsity$
and the residue $tail_k(\x) = \x - head_k(\x)$.
Let $B_{\inputdim,\sparsity}$ represent the set of all $\sparsity$-sparse
binary vectors $$B_{\inputdim,\sparsity} = \{ \x \in \{0,1\}^{\inputdim} :
\|\x\|_{0} \leq \sparsity\}.$$  Let $\Re^{+}_{\inputdim,\sparsity,\cee}$
represent the set $$\Re^{+}_{\inputdim,\sparsity,\cee} = \{\x \in
\Re_{+}^{\inputdim} : \|tail_\sparsity(\x)\|_1 \leq \cee\} \, ,$$ and let
$\Re_{\inputdim,\sparsity,\cee}$ represent the set
$$\Re_{\inputdim,\sparsity,\cee} = \{\x \in \Re^{\inputdim} :
\|tail_\sparsity(\x)\|_1 \leq \cee\} \, .$$
We examine datasets where feature vectors are sparse and come from
$B_{\inputdim,\sparsity}$. Alternatively, the feature can be ``near-sparse''
and come from $\Re^{+}_{\inputdim,\sparsity,\cee}$ or
$\Re_{\inputdim,\sparsity,\cee}$, for some parameters
$\sparsity,\inputdim,\cee$, with $\sparsity$ typically being much smaller
than $\inputdim$.
Let $\Hyp_{\inputdim,\sepsparsity}$ denote the set
$$\Hyp_{\inputdim,\sepsparsity} = \{\w \in \Re^{\inputdim} : \|\w\|_0 \leq
 \sepsparsity\} \,.$$
We denote by $\N_{n_{1}}(f)$ the family of feed-forward neural networks with
one hidden layer containing $n_{1}$ nodes with $f$ as the non-linear
function applied at each hidden unit, and a linear function at the output
layer.  Similarly, $\N_{n_{1},n_{2}}(f)$ designates neural networks with two
hidden layers with $n_{1}$ and $n_{2}$ nodes respectively.

\section{Sketching} \label{sec:sketching}
We will use the following family of randomized sketching algorithms based on the count-min sketch. Given a
parameter $\rangesize$, and a hash function $h : [\inputdim] \rightarrow
[\rangesize]$, the sketch $Sk_{h}(\x)$ is a vector $\y$ where
\begin{align*}
  y_l &= \sum_{i : h(i) = l} x_i\,.
\end{align*}
We will use several such sub-sketches to get our eventual sketch. It will be
notationally convenient to represent the sketch as a matrix with the
sub-sketches as columns. Given an ordered set of hash functions
$h_{1},\ldots, h_{\numhashes} \eqdef h_{1:\numhashes}\,$, the sketch
$Sk_{h_{1:\numhashes}}(\x)$ is defined as a matrix $Y$, where the
$j$'th column $\y_j = Sk_{h_j}(\x)$.

When $\x \in B_{\inputdim,\sparsity}\,$, we will use a boolean version where the
sum is replaced by an {\sc OR}. Thus $BSk_{h}$ is a vector $\y$ with
$$y_l = \operatorname*{\myOr}_{\displaystyle i : h(i) = l} x_i \, . $$
The sketch $BSk_{h_{1:t}}(\x)$ is
defined analagously as a matrix $Y$ with $\y_j = BSk_{h_j}(\x)$.
Thus a sketch $BSk_{h_{1:t}}(\x)$ is a matrix $[\y_{1},\ldots,\y_{t}]$ where
the $j$'th column is $\y_{j} \in \Re^{\rangesize}$. We define the following
decoding procedures:
\begin{align*}
Dec(\y, i ; h) &\eqdef y_{h(i)}\\
DecMin(Y, i ; h_{1:t}) &\eqdef \min_{j \in [t]}(Dec(\y_{j}, i ; h_{j})) \, .
\end{align*}
In the Boolean case, we can similarly define,
\begin{align*}
DecAnd(Y, i ; h_{1:t}) &\eqdef
	\operatorname*{\myAnd}_{j\in [t]}(Dec(\y_{j}, i ; h_{j}))\,.
\end{align*}
When it is clear from context, we omit the hash functions
$h_j$ from the arguments in $Dec$, $DecMin$, and $DecAnd$.

The following theorem summarizes the important property of these sketches.
To remind the reader, a set of hash functions $h_{1:t}$ from $[d]$ to $[m]$
is {\em pairwise independent} if for all $i\neq j$ and $a,b\in[m]$,
$\Pr[h_i=a \wedge h_j=b] = \frac{1}{m^2}$. Such hash families can be easily
constructed (see e.g.~\cite{MitzenmacherU2005}), using $O(\log m)$ random
bits. Moreover,  each hash can be evaluated using $O(1)$ arithmetic operations over
 $O(\log d)$-sized words.
\begin{thm}
  \label{thm:bsk}
Let $\x \in B_{\inputdim,\sparsity}$ and for $j\in[t]$ let
$h_{j} : [\inputdim] \rightarrow [\rangesize]$ be drawn uniformly and independently
from a pairwise independent distribution with $\rangesize=e\sparsity$.
Then for any $i$,
\begin{align*}
\Pr[DecMin(Sk_{h_{1:t}}(\x), i) \neq x_{i}] \leq e^{-t},\\
\Pr[DecAnd(BSk_{h_{1:t}}(\x), i) \neq x_{i}] \leq e^{-t}\,.
\end{align*}
\end{thm}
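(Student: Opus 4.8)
The plan is to fix the coordinate $i$ and reduce both error events to a single \emph{collision} event. Call the $j$-th sketch \emph{clean for $i$} if no other support coordinate of $\x$ lands in the bucket of $i$, i.e. $h_j(i') \neq h_j(i)$ for every $i' \in \supp(\x) \setminus \{i\}$. First I would record the one-sided structure of the count-min sketch: since the entries of $\x$ are nonnegative, $Dec(\y_j, i) = \sum_{i' \in \supp(\x):\, h_j(i') = h_j(i)} x_{i'} \geq x_i$, with equality exactly when the $j$-th sketch is clean for $i$ (if $x_i = 1$ the bucket always carries the contribution of $i$ itself, and cleanness removes every other contribution; if $x_i = 0$ the bucket is empty iff clean). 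The Boolean sketch behaves identically with $\myOr$ in place of the sum: again $Dec(\y_j, i) \geq x_i$ as $0$--$1$ values, and a clean sketch forces $Dec(\y_j, i) = x_i$.

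Next I would push this through the two aggregators. Because $Dec(\y_j, i) \geq x_i$ for every $j$, we get $DecMin(Sk_{h_{1:t}}(\x), i) = \min_j Dec(\y_j, i) \geq x_i$, and the minimum attains the value $x_i$ as soon as a single clean sketch exists. Dually, $DecAnd(BSk_{h_{1:t}}(\x), i) = \myAnd_j Dec(\y_j, i)$ equals $1$ whenever $x_i = 1$ (correctly, since every bucket then contains $i$), while for $x_i = 0$ it equals the correct value $0$ precisely when at least one sketch is clean. Hence in all cases the decoding errs \emph{only if every} one of the $\numhashes$ sketches fails to be clean for $i$, and it suffices to bound the probability of this event.

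It remains to bound that event. For a single $h_j$, pairwise independence gives $\Pr[h_j(i') = h_j(i)] = 1/\rangesize$ for each fixed $i' \neq i$, so a union bound over the at most $\sparsity$ support coordinates different from $i$ yields $\Pr[\text{$j$-th sketch not clean for } i] \leq \sparsity/\rangesize = 1/e$, using $\rangesize = e\sparsity$. Since $h_1, \ldots, h_{\numhashes}$ are drawn independently, the events ``sketch $j$ not clean'' are independent across $j$, so the probability that all $\numhashes$ sketches fail is at most $(1/e)^{\numhashes} = e^{-\numhashes}$, which gives both claimed bounds. I expect the only genuine subtlety to be the one-sided structure established in the first paragraph: the whole argument hinges on the sketch never \emph{under}-estimating $x_i$, so that a single uncorrupted bucket certifies the answer and the $\min$ and $\myAnd$ aggregators can never be misled by the remaining (over-estimating) sketches — exactly the property that fails for dense sign-preserving projections. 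The probabilistic part is a routine union bound and uses only pairwise, not full, independence within each hash.
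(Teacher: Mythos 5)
Your proposal is correct and follows essentially the same route as the paper: both arguments rest on the one-sided overestimation property of the count-min/Boolean sketch (a single collision-free hash certifies $x_i$), bound the per-hash failure probability by $\sparsity/\rangesize = 1/e$ using pairwise independence, and multiply across the independently drawn $h_j$'s. The only cosmetic difference is that you phrase the per-hash bound as a union bound over colliding support coordinates while the paper applies Markov's inequality to the expected overcount $\sum_{i'\in\colideset{i}} x_{i'}$ --- these are the same calculation.
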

\begin{proof}
Fix a vector $\x \in B_{\inputdim,\sparsity}\,$.
For a specific $i$ and $h$, the decoding
$Dec(\y, i ; h)$ is equal to $y_{h(i)}$. Let us denote the set of collision indices of $i$ as
$$\colideset{i} \eqdef \{i'\neq i: h(i') = h(i)\} \,.$$
Then, we can rewrite
$y_{h(i)}$ as $x_i + \sum_{i'\in\colideset{i}} x_{i'}$. By pairwise
independence of $h$,
\begin{align*}
	\Ex\left[\sum_{i'\in\colideset{i}} x_{i'}\right] &=
		\sum_{i': x_{i'} = 1} \Pr[h(i') = h(i)]
    \leq \frac{\sparsity}{\rangesize} = \frac{1}{e} \,,
\end{align*}
since the sum is over at most $\sparsity$ terms, and each term is $\frac{1}{\rangesize}$. Thus by Markov's inequality, it follows that for any $j \in [t]$,
\begin{align*}
	\Pr[Dec(Sk_{h_j}(\x), i ; h_j) \neq x_{i}] \leq \frac{1}{e}\,.
\end{align*}
Moreover, it is easy to see that
$DecMin(Sk_{h_{1:t}}(\x),i)$ equals $x_i$ unless for each $j \in t$,
$$Dec(Sk_{h_j}(\x), i ; h_j) \neq x_{i}\,.$$
Since the $h_j$'s are drawn independently, it follows that
\begin{align*}
	\Pr[DecMin(Sk_{h_{1:t}}(\x), i) \neq x_{i}] \leq e^{-t}\,.
\end{align*}
The argument for $BSk$ is analogous, with $+$ replaced by $\vee$ and $\min$
replaced by $\wedge$.  \end{proof}

\begin{cor}
\label{cor:bcor}
Let $\w \in \Hyp_{\inputdim,\sepsparsity}$ and $\x \in
B_{\inputdim,\sparsity}$. For $t = \log (\sepsparsity/\delta)$, and
$\rangesize=e\sparsity$, if $h_{1},\ldots,h_{t}$ are drawn uniformly and independently
from a pairwise independent distribution, then
\begin{align*}
\Pr\left[\sum_{i} w_{i} \, DecMin(Sk_{h_{1:t}}(\x), i) \neq
\w^\top\x\right] \leq \delta,\\
\Pr\left[\sum_{i} w_{i} \, DecAnd(BSk_{h_{1:t}}(\x), i) \neq
\w^\top\x\right] \leq \delta \,.
\end{align*}
\end{cor}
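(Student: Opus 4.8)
The plan is to reduce the statement to a union bound over the support of $\w$ combined with the per-coordinate guarantee of Theorem~\ref{thm:bsk}. The central observation is that the two weighted sums can only differ when some coordinate is decoded incorrectly, and only coordinates on which $\w$ is supported can contribute to any discrepancy. Writing $\w^\top\x = \sum_i w_i x_i$ and comparing it with $\sum_i w_i \, DecMin(Sk_{h_{1:t}}(\x), i)$ term by term: whenever $w_i = 0$ the two terms agree trivially, and whenever $w_i \neq 0$ they agree precisely when $DecMin(Sk_{h_{1:t}}(\x), i) = x_i$. Hence the failure event is contained in $\bigcup_{i \in \supp(\w)} \{DecMin(Sk_{h_{1:t}}(\x), i) \neq x_i\}$.

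First I would invoke Theorem~\ref{thm:bsk} to bound each event in this union: for any fixed $i$ we have $\Pr[DecMin(Sk_{h_{1:t}}(\x), i) \neq x_i] \leq e^{-t}$. Since $\w \in \Hyp_{\inputdim,\sepsparsity}$ satisfies $\|\w\|_0 \leq \sepsparsity$, the union ranges over at most $\sepsparsity$ indices, so a union bound yields $\Pr[\text{failure}] \leq \sepsparsity\, e^{-t}$. Substituting $t = \log(\sepsparsity/\delta)$ gives $e^{-t} = \delta/\sepsparsity$ and therefore the bound $\delta$, as claimed. The Boolean case is identical, replacing $Sk$ by $BSk$ and $DecMin$ by $DecAnd$ and invoking the corresponding half of Theorem~\ref{thm:bsk}.

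I do not expect a genuine obstacle here; the argument is a clean union bound built directly on the per-coordinate theorem. The one point worth emphasizing is that the union is restricted to $\supp(\w)$ rather than all of $[\inputdim]$. This is exactly what makes the number of repetitions $t$—and hence the sketch size—scale with the sparsity $\sepsparsity$ of the linear functional rather than with the ambient dimension $\inputdim$: coordinates outside $\supp(\w)$ may well be misdecoded, but they are multiplied by $w_i = 0$ and so cannot affect the value of the inner product.
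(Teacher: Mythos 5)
Your proof is correct and is exactly the intended argument: the paper states this corollary without proof as an immediate consequence of Theorem~\ref{thm:bsk}, and the union bound over the at most $\sepsparsity$ coordinates in $\supp(\w)$, with $t=\log(\sepsparsity/\delta)$ making each per-coordinate failure probability $\delta/\sepsparsity$, is precisely how it follows. Your remark that misdecoded coordinates outside $\supp(\w)$ are harmless because they are multiplied by zero is the right observation and matches the paper's reasoning.
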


In Appendix~\ref{app:generalsketches}, we prove the following extensions of
these results to vectors $\x$ in $\Re^{+}_{\inputdim,\sparsity,\cee}$ or in
$\Re_{\inputdim,\sparsity,\cee}\,$. Here $DecMed(\cdot,\cdot)$ implements
the median of the individual decodings.
\begin{cor} \label{cor:poscor}
Let $\w \in \Hyp_{\inputdim,\sepsparsity}$ and
$\x \in \Re^{+}_{\inputdim,\sparsity,\cee}\,$.
For $t = \log (\sepsparsity/\delta)$, and
$\rangesize=e(\sparsity +\frac 1 \eps)$, if $h_{1},\ldots,h_{t}$ are
drawn uniformly and independently from a pairwise independent distribution,
then
\begin{align*}
\Pr\!\!\left[\left|\sum_{i} w_{i} DecMin(Sk_{h_{1:t}}(\x),i) -
	\w^{\top}\x\right| \geq \eps c \|\w\|_{1}\right] \!\leq\!\delta
\end{align*}
\end{cor}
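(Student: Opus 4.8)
The plan is to mirror the proof of Theorem~\ref{thm:bsk}: isolate a single-hash ``bad event'' of probability at most $\frac1e$, amplify it via the $\min$ over the $\numhashes$ independent hashes, and finally union-bound over the support of $\w$. The only genuinely new ingredient relative to the exactly-sparse case is handling the near-sparse structure of $\x$, which I will do by splitting the collision error into a heavy part and a tail part.

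First I would record that, since $\x$ is non-negative, every hash overestimates. Writing $Dec(Sk_{h_j}(\x),i;h_j) = x_i + \sum_{i'\in\colideset{i}} x_{i'}$ with $\colideset{i} = \{i'\neq i : h_j(i')=h_j(i)\}$, the collision mass $E_{ij}\eqdef\sum_{i'\in\colideset{i}} x_{i'}$ is $\ge 0$. Hence $DecMin(Sk_{h_{1:\numhashes}}(\x),i) = x_i + \min_{j} E_{ij}$, and the quantity to control is $\sum_i w_i\big(DecMin(Sk_{h_{1:\numhashes}}(\x),i) - x_i\big) = \sum_i w_i \min_j E_{ij}$, whose absolute value is at most $\sum_{i\in\supp(\w)} |w_i|\,\min_j E_{ij}$ — a sum of at most $\sepsparsity$ terms since $\|\w\|_0\le\sepsparsity$.

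Next I would bound the per-coordinate, per-hash error $E_{ij}$. Let $S=\supp(head_\sparsity(\x))$, so $|S|\le\sparsity$ and $\sum_{i'\notin S} x_{i'} = \|tail_\sparsity(\x)\|_1\le\cee$. The event $\{E_{ij}\ge\eps\cee\}$ is contained in the union of (a) \emph{some heavy coordinate of $S$ collides with $i$}, which by pairwise independence has probability at most $|S|/\rangesize\le\sparsity/\rangesize$; and (b) \emph{the tail-collision mass $\sum_{i'\notin S,\,i'\in\colideset{i}} x_{i'}$ is at least $\eps\cee$}, because on the complement of (a) we have $E_{ij}=$ tail mass. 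Since the tail mass has expectation $\frac{1}{\rangesize}\sum_{i'\notin S}x_{i'}\le\cee/\rangesize$, Markov's inequality bounds (b) by $\frac{1}{\eps\rangesize}$. Plugging in $\rangesize=e(\sparsity+\frac1\eps)$ gives $\Pr[E_{ij}\ge\eps\cee]\le\frac{1}{\rangesize}\big(\sparsity+\frac1\eps\big)=\frac1e$.

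Finally, independence of the $h_j$ yields $\Pr[\min_j E_{ij}\ge\eps\cee]\le e^{-\numhashes}$, which equals $\delta/\sepsparsity$ for $\numhashes=\log(\sepsparsity/\delta)$. A union bound over the at most $\sepsparsity$ indices of $\supp(\w)$ shows that with probability at least $1-\delta$ every such coordinate satisfies $\min_j E_{ij}<\eps\cee$, whence $\big|\sum_i w_i(DecMin(Sk_{h_{1:\numhashes}}(\x),i)-x_i)\big| < \eps\cee\sum_i|w_i| = \eps\cee\|\w\|_1$, as claimed. The main obstacle, and the step to get exactly right, is the two-part split of the collision error together with the matching choice of $\rangesize$: the heavy term forces $\rangesize\gtrsim\sparsity$ so heavy coordinates rarely collide, while the tail term forces $\rangesize\gtrsim1/\eps$ so that Markov on the residual $\ell_1$ mass clears the $\eps\cee$ threshold, and $\rangesize=e(\sparsity+1/\eps)$ is precisely the value that balances the two contributions down to $\frac1e$.
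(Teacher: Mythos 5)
Your proposal is correct and follows essentially the same route as the paper's own argument, which proves the per-coordinate statement as Theorem~\ref{thm:rsk} in Appendix~\ref{app:generalsketches} via the identical head/tail split of the collision mass (probability $\sparsity/\rangesize$ that a heavy coordinate collides, Markov's inequality giving $1/(\eps\rangesize)$ for the tail mass), a union bound to get $1/e$ per hash, and independence of the $h_j$'s to amplify to $e^{-t}$, before union-bounding over $\supp(\w)$. No gaps.
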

\begin{cor}
\label{cor:gencor}
Let $\w \in \Hyp_{\inputdim,\sepsparsity}$ and
$\x \in \Re_{\inputdim,\sparsity,\cee},$.
For $t = \log (\sepsparsity/\delta)$, and $\rangesize=4e^2(\sparsity+\frac 1 \eps)$, if $h_{1},\ldots,h_{t}$ are drawn uniformly and independently from a pairwise independent distribution, then
\begin{align*}
\Pr\!\!\left[\left|\sum_{i} w_{i} \, DecMed(Sk_{h_{1:t}}(\x),i) -
	\w^{\top}\x\right| \geq \eps \cee \|\w\|_{1}\right] \!\leq\! \delta
\end{align*}
\end{cor}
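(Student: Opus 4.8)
The plan is to follow the same two-stage skeleton as Theorem~\ref{thm:bsk}: first establish a per-coordinate decoding guarantee for a single hash, then amplify it across the $t$ independent hashes---except that here the aggregation is by median rather than by $\min$. Write $e_i \eqdef DecMed(Sk_{h_{1:t}}(\x),i)-x_i$ for the decoding error at coordinate $i$. Since $\sum_i w_i\,DecMed(Sk_{h_{1:t}}(\x),i)-\w^\top\x=\sum_{i\in\supp(\w)}w_i e_i$, the triangle inequality gives $\bigl|\sum_i w_i e_i\bigr|\le\sum_i|w_i|\,|e_i|\le\eps\cee\|\w\|_1$ as soon as $|e_i|\le\eps\cee$ for every $i\in\supp(\w)$. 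As $|\supp(\w)|\le\sepsparsity$, it therefore suffices to prove $\Pr[|e_i|>\eps\cee]\le\delta/\sepsparsity=e^{-t}$ for a fixed $i$ and finish with a union bound over the at most $\sepsparsity$ coordinates of $\supp(\w)$.

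For a single hash $h_j$ we have, as in the proof of Theorem~\ref{thm:bsk}, $Dec(Sk_{h_j}(\x),i;h_j)=x_i+N_{ij}$ with collision noise $N_{ij}=\sum_{i'\in\colideset{i}}x_{i'}$. The new difficulty, and the reason we decode with the median, is that the entries of $\x$ can have mixed signs: collisions may cancel, so the minimum is no longer a sound estimator and the clean Markov bound of Corollary~\ref{cor:poscor} no longer applies directly. To control $N_{ij}$ I would split it along the head/tail decomposition, $N_{ij}=N^H_{ij}+N^T_{ij}$, where $N^H_{ij}$ gathers the collisions with coordinates in $\supp(head_\sparsity(\x))$ and $N^T_{ij}$ the collisions with the tail; note $|N_{ij}|\le\eps\cee$ whenever $N^H_{ij}=0$ and $|N^T_{ij}|\le\eps\cee$. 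The head contributes at most $\sparsity$ coordinates, each colliding with $i$ with probability $1/\rangesize$, so $\Pr[N^H_{ij}\neq0]\le\sparsity/\rangesize$. For the tail, $\Ex|N^T_{ij}|\le\frac1\rangesize\|tail_\sparsity(\x)\|_1\le\cee/\rangesize$, and Markov's inequality gives $\Pr[|N^T_{ij}|>\eps\cee]\le 1/(\rangesize\eps)$.

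Adding these and substituting $\rangesize=4e^2(\sparsity+\tfrac1\eps)$ yields $p\eqdef\Pr[|N_{ij}|>\eps\cee]\le\frac1\rangesize\bigl(\sparsity+\tfrac1\eps\bigr)=\frac1{4e^2}$; the constant $4e^2$ is chosen precisely so that the head and tail terms sum to this value. For the amplification, the median of $N_{i1},\dots,N_{it}$ can exceed $\eps\cee$ in magnitude only if at least $t/2$ of the individual noises do, and since the hashes are independent a union bound over the $\binom{t}{\lceil t/2\rceil}\le2^t$ majority sets gives $\Pr[|e_i|>\eps\cee]\le 2^t p^{t/2}=(4p)^{t/2}\le(e^{-2})^{t/2}=e^{-t}$. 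With $t=\log(\sepsparsity/\delta)$ this is exactly $\delta/\sepsparsity$, and the union bound from the first paragraph finishes the argument.

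I expect the main obstacle to be the second stage: recognizing that signed noise rules out the $\min$-based decoder and forces the median, and then pushing the single-hash failure probability below the threshold $1/(4e^2)$ that the median amplification needs in order to beat $e^{-t}$. Once the head/tail split is set up and the range $\rangesize$ is calibrated to that threshold, the remaining steps are the same routine Markov and union-bound calculations used in the earlier results.
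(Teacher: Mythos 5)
Your proof is correct and follows essentially the same route as the paper's: a head/tail split of the collision noise, Markov plus a union bound to get a per-hash failure probability of $\tfrac{1}{4e^2}$, median amplification down to $e^{-t}$, and a union bound over the at most $\sepsparsity$ coordinates of $\supp(\w)$. The only departures are cosmetic: you bound the binomial tail by counting majority subsets ($2^t p^{t/2}$) where the paper invokes a Chernoff bound, and your one-sided Markov on $|N^T_{ij}|$ actually matches the corollary's stated $\rangesize = 4e^2(\sparsity + \tfrac{1}{\eps})$, whereas the paper's own appendix theorem applies Markov to each tail of the signed noise separately and therefore works with $\rangesize = 4e^2(\sparsity + \tfrac{2}{\eps})$.
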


\section{Sparse Linear Functions} \label{sec:sparse_linear}

\begin{figure*}[t!]
\label{fig:sketch}
\begin{center}
\includegraphics[width=5in]{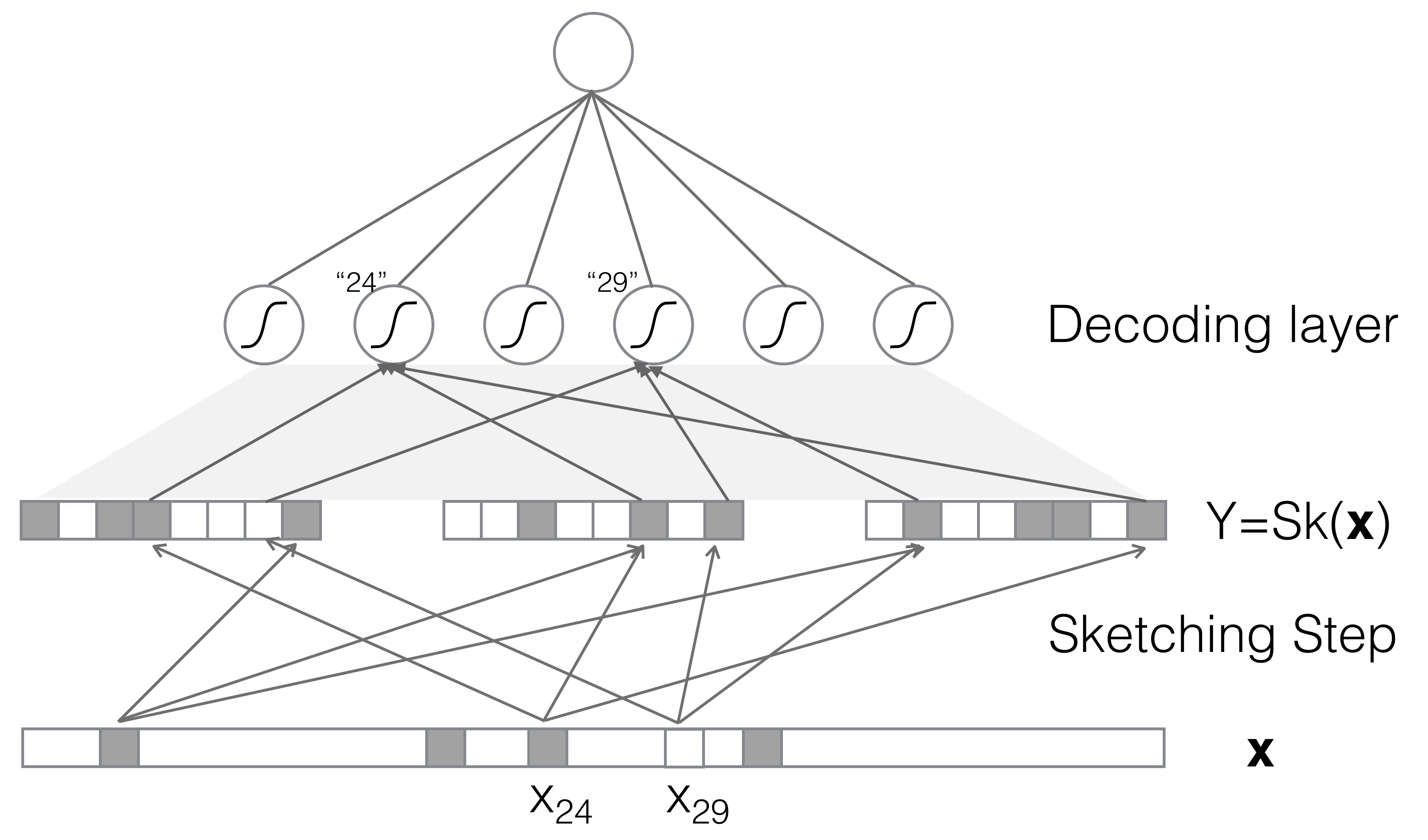}
\caption{Schematic description of neural-network sketching. The sparse
vector $\x$ is sketched to a sketch using $t=3$ hash functions, with $m=8$.
The shaded squares correspond to $1$'s. This sketching step is random and
not learned. The sketch is then used as an input to the single-layer neural
network that is trained to predict $\w^\top \x$ for a sparse weight vector
$\w$. Note that each node in the hidden layer of a candidate network
corresponds to a coordinate in the support of $\w$. We have labelled two
nodes ``24'' and ``29'' corresponding to decoding for $x_{24}$ and $x_{29}$
and shown the non-zero weight edges coming into them.}
\end{center}
\end{figure*}

Let $\w \in \Hyp_{\inputdim,\sepsparsity}$ and $\x \in
B_{\inputdim,\sparsity}$. As before we denote the sketch matrix by $Y
= BSk_{h_{1},\ldots,h_{t}}(\x)$ for $\rangesize,t$ satisfying the conditions of
Corollary~\ref{cor:bcor}. We will argue that there exists a network  in $\N_{\sepsparsity}(\relu)$
that takes $Y$ as input and outputs $\w^{\top}\x$ with high probability (over the randomness of the sketching process). Each hidden unit in our network corresponds to an index of a non-zero weight
$i \in \supp(\w)$ and implements $DecAnd(Y, i)$. The top-layer weight for the hidden unit corresponding to $i$ is simply $w_{i}$.

It remains to show that the first layer of the neural network can implement $DecAnd(Y,i)$. Indeed, implementing the
\texttt{And} of $t$ bits can be done using nearly any non-linearity. For
example, we can use a $\relu(a)=\max\{0,a\}$ with a fixed bias of $t-1$:
for any set $T$ and binary matrix $Y$ we have,
$$\displaystyle \operatorname*{\myAnd}_{(l,j)\in T} Y_{lj} =
	\relu\left(\displaystyle \sum_{l,j}
		\one\left[(l,j) \in T\right] \, Y_{lj} -
	\left(|T|-1\right)\right) \,.$$

Using Corollary~\ref{cor:bcor}, we have the following theorem:
\begin{thm}
\label{thm:boolnet}
For every $\w \in \Hyp_{\inputdim,\sepsparsity}$  there exists a set of weights
for a network $N \in \N_{\sepsparsity}(\relu)$ such that for each $\x \in
B_{\inputdim,\sparsity}$,
\begin{align*}
	Pr_{h_{1:{t}}} [N(BSk_{h_{1:t}}(\x)) = \w^\top \x] \geq 1-\delta \,,
\end{align*}
as long as $\rangesize = e\sparsity$ and $t=\log(\sepsparsity/\delta)$.
Moreover, the weights coming into each node in the hidden layer are in
$\{0,1\}$ with at most $t$ non-zeros.
\end{thm}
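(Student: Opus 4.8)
The plan is to build the network explicitly from the decoding map $DecAnd$ and then read off the failure probability from Corollary~\ref{cor:bcor}. First I would assign one hidden unit to each index $i \in \supp(\w)$. Since $\w \in \Hyp_{\inputdim,\sepsparsity}$ has $\|\w\|_0 \le \sepsparsity$, this uses at most $\sepsparsity$ units, which we may pad to exactly $\sepsparsity$ by adding units with zero top-layer weight; the resulting network lies in $\N_{\sepsparsity}(\relu)$. The top-layer weight on the unit for index $i$ is set to $w_i$, so the network output is $\sum_{i \in \supp(\w)} w_i \cdot (\text{value of hidden unit } i)$. It then remains to wire the first layer so that the unit for $i$ computes $DecAnd(Y, i) = \myAnd_{j \in [t]} Y_{h_j(i), j}$ on the input sketch matrix $Y = BSk_{h_{1:t}}(\x)$.

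The crux is the ReLU--AND identity displayed just before the statement. For the unit corresponding to $i$, take $T = \{(h_j(i), j) : j \in [t]\}$, the set of sketch cells that $i$ hashes into across the $t$ sub-sketches. Setting the first-layer weight equal to $1$ on exactly these cells and $0$ elsewhere, with bias $-(|T|-1)$, the identity gives $\relu(\sum_{l,j}\one[(l,j)\in T]\,Y_{lj} - (|T|-1)) = \myAnd_{(l,j)\in T} Y_{lj} = DecAnd(Y,i)$, since every $Y_{lj}$ is $0$-$1$. Note that the pairs $(h_j(i), j)$ are automatically distinct across $j$ (their second coordinates differ), so $|T| = t$ and the nonzero count is exactly $t$. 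This simultaneously settles the ``moreover'' clause: each hidden unit's incoming weights lie in $\{0,1\}$ with at most $t$ nonzeros.

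Finally, with this construction $N(BSk_{h_{1:t}}(\x)) = \sum_{i} w_i\, DecAnd(BSk_{h_{1:t}}(\x), i)$ holds as an exact identity for every $\x \in B_{\inputdim,\sparsity}$, so the claimed bound $\Pr[N(BSk_{h_{1:t}}(\x)) = \w^\top\x] \ge 1-\delta$ follows immediately from Corollary~\ref{cor:bcor} under the stated choices $\rangesize = e\sparsity$ and $t = \log(\sepsparsity/\delta)$. I do not expect a genuine obstacle here, since all the probabilistic work has been discharged by the corollary and the remaining content is the deterministic ReLU encoding of an AND. The only point I would verify with care is that distinct indices $i, i' \in \supp(\w)$ may collide in a sub-sketch (i.e.\ $h_j(i) = h_j(i')$), so two hidden units can read a shared input cell; this is harmless, as each unit reads its own tuple of $t$ cells and a standard fully-connected first layer permits input edges to be shared across units.
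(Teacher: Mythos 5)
Your proposal is correct and follows essentially the same route as the paper: one hidden unit per index in $\supp(\w)$ implementing $DecAnd(Y,i)$ via the ReLU--AND identity with bias $-(t-1)$, top-layer weight $w_i$, and the probability bound imported directly from Corollary~\ref{cor:bcor}. The additional checks you make (padding to exactly $\sepsparsity$ units, distinctness of the cells $(h_j(i),j)$ across $j$, and harmlessness of shared input cells between units) are all sound and only make the argument more explicit than the paper's informal presentation.
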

The final property implies that when using $\w$ as a linear classifier, we
get small generalization error as long as the
number of examples is at least
$\Omega(\sepsparsity(1+t\log \rangesize t))$.. This can be proved, e.g., using
  standard compression arguments: each such model can be represented
  using only $\sepsparsity t \log (\rangesize t)$ bits in addition to
  the representation size of $\w$. Similar bounds hold when we use
  $\ell_1$ bounds on the weight coming into each unit. Note that even for $s=d$ (i.e. $\w$ is unrestricted), we get non-trivial input compression.

For comparison, we prove the following result for Gaussian projections in the appendix~\ref{app:gaussian}. In this case, the model weights in our construction are not sparse.
\begin{thm}
\label{thm:boolgaussnet}
For every $\w \in \Hyp_{\inputdim,\sepsparsity}$  there exists a set of weights
for a network $N \in \N_{\sepsparsity}(\relu)$ such that for each $\x \in
B_{\inputdim,\sparsity}$,
\begin{align*}
	Pr_{h_{1:{t}}} [N(G\x)) = \w^\top \x] \geq 1-\delta \,,
\end{align*}
as long as $G$ is a random $\rangesize \times \inputdim$ Gaussian matrix, with $\rangesize \geq 4\sparsity\log(\sepsparsity/\delta)$.
\end{thm}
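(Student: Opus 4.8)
The plan is to mirror the architecture of Theorem~\ref{thm:boolnet}, replacing the count-min decoding $DecAnd$ by a linear read-out against the columns of $G$. Write $g_i\in\Re^{\rangesize}$ for the $i$-th column of $G$, so that $G\x=\sum_{i\in\supp(\x)}g_i$ since $\x$ is binary. For each $i\in\supp(\w)$ I would create one hidden unit whose incoming weight vector is $a_i=g_i/\|g_i\|_2^2$ (a fixed vector depending only on the random $G$), giving pre-activation
\[
\langle a_i,G\x\rangle = x_i + \eta_i,\qquad \eta_i=\frac{1}{\|g_i\|_2^2}\sum_{i'\in\supp(\x)\setminus\{i\}}\langle g_i,g_{i'}\rangle .
\]
The output layer uses weight $w_i$ on this unit, exactly as in the Boolean construction. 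The whole argument then reduces to showing that, with probability $1-\delta$ over $G$, every noise term $\eta_i$ has magnitude below $1/2$, so that thresholding the pre-activation at $1/2$ recovers $x_i\in\{0,1\}$ exactly and the top layer returns $\sum_{i}w_ix_i=\w^\top\x$.

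The technical heart is the Gaussian concentration of $\eta_i$, which is where this proof departs from the elementary Markov bound used for $BSk$. I would condition on $g_i$. Then $\|g_i\|_2^2$ is a $\chi^2$ variable with $\rangesize$ degrees of freedom, and each cross term $\langle g_i,g_{i'}\rangle$ with $i'\neq i$ is, conditionally, distributed as $N(0,\|g_i\|_2^2)$ and independent across $i'$. Hence $\eta_i\mid g_i\sim N\!\big(0,\,|\supp(\x)\setminus\{i\}|/\|g_i\|_2^2\big)$, with variance at most $\sparsity/\|g_i\|_2^2$. Combining a lower-tail $\chi^2$ estimate for $\|g_i\|_2^2$ (to guarantee $\|g_i\|_2^2=\Omega(\rangesize)$) with the Gaussian tail $\Pr[|\eta_i|\ge 1/2\mid g_i]\le 2\exp(-\|g_i\|_2^2/8\sparsity)$ yields $\Pr[|\eta_i|\ge 1/2]\le \delta/\sepsparsity$ once $\rangesize\ge 4\sparsity\log(\sepsparsity/\delta)$; the explicit constant $4$ comes from carrying the $\chi^2$ and Gaussian estimates together rather than bounding them separately.

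A union bound over the at most $\sepsparsity$ indices $i\in\supp(\w)$ then gives that all decodings succeed simultaneously with probability $1-\delta$. On that event each hidden unit's threshold returns $x_i$ and the network outputs $\w^\top\x$ exactly. Implementing the threshold inside $\N_{\sepsparsity}(\relu)$ is done as in Theorem~\ref{thm:boolnet}: conditioned on success the pre-activation lies in a half-line cleanly bounded away from $1/2$, so a suitably scaled and biased $\relu$ reproduces the bit $x_i$. The resulting first-layer weights $a_i$ are dense, which is precisely the loss of sparsity flagged in the statement. The main obstacle is the concentration step: unlike the Boolean sketch, here both the normalizer $\|g_i\|_2^2$ and the interference $\sum_{i'}\langle g_i,g_{i'}\rangle$ are random and coupled, so the estimate must control the ratio carefully to land on the stated bound $\rangesize\ge 4\sparsity\log(\sepsparsity/\delta)$.
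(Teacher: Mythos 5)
Your construction is essentially the paper's: the read-out $\langle g_i, G\x\rangle/\|g_i\|_2^2$ is exactly the estimator derived in Appendix~\ref{app:gaussian} by optimizing the weights $\alpha_{ji}\propto g_{ji}^2$, and the analysis --- condition on the $i$-th column of $G$, note the interference is conditionally Gaussian with variance roughly $\sparsity/\rangesize$, apply a tail bound plus a union bound over $\supp(\w)$, then threshold inside the hidden layer --- is the same argument. If anything you are slightly more explicit than the paper, which records only the variance bound and leaves the Gaussian tail, the union bound over the $\sepsparsity$ coordinates, and the final ReLU rounding step implicit.
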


To implement $DecMin(\cdot)$, we need to use a slightly
non-conventional non-linearity. For a weight vector $\z$ and input
vector $\x$, a $\min$ gate implements $\min_{i: z_i \neq 0} z_i
x_i\,$. Then, using Corollary~\ref{cor:poscor}, we get the following theorem.
\begin{thm}
For every $\w \in \Hyp_{\inputdim,\sepsparsity}$ there exists a set of
weights for a network $N \in \N_{\sepsparsity}(\min)$ such that for each
$\x \in \Re^{+}_{\inputdim,\sparsity,\cee}$ the following holds,
\begin{align*}
Pr_{h_{1:t}}\left[\,|N(Sk_{h_{1},\ldots,h_{t}}(\x)) - \w^{\top}\x|
	\geq \eps \cee \|\w\|_{1}\,\right] \leq \delta \,.
\end{align*}
as long as $\rangesize = e(k+\frac{1}{\eps})$ and $t=\log(\sepsparsity/\delta)$.
Moreover, the weights coming into each node in the hidden layer are binary
with $t$ non-zeros.
\end{thm}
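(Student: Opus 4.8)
The plan is to reuse, essentially verbatim, the construction from the proof of Theorem~\ref{thm:boolnet}, but with each \texttt{And} gate replaced by a $\min$ gate and with Corollary~\ref{cor:poscor} invoked in place of Corollary~\ref{cor:bcor}. Write $Y = Sk_{h_{1:t}}(\x)$ for the real-valued sketch matrix, so that its $(l,j)$ entry is $Y_{lj}$ and $Dec(\y_j, i; h_j) = Y_{h_j(i),j}$. I would build a network with exactly one hidden unit per index $i \in \supp(\w)$ --- giving at most $\sepsparsity$ hidden units, as required for membership in $\N_{\sepsparsity}(\min)$ --- and arrange for the $i$-th hidden unit to output precisely $DecMin(Y,i)$.

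The crux is to check that a single $\min$ gate can compute $DecMin(Y,i) = \min_{j \in [t]} Y_{h_j(i),j}$. By the stated convention, a $\min$ gate with weight matrix $\z$ on input $Y$ outputs $\min_{(l,j): z_{lj} \neq 0} z_{lj} Y_{lj}$. I would therefore take the indicator weights $z_{lj} = \one[\,l = h_j(i)\,]$. These weights are binary, and since the column index $j$ distinguishes the selected cells, there are exactly $t$ nonzero entries (one per column), which establishes the final ``binary weights with $t$ non-zeros'' claim. With these weights the gate outputs $\min_{j} Y_{h_j(i),j} = DecMin(Y,i)$. Setting the top-layer weight of the $i$-th hidden unit to $w_i$, the whole network computes $N(Y) = \sum_{i} w_i \, DecMin(Y,i)$.

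The remaining step is a direct appeal to Corollary~\ref{cor:poscor}. Under the hypotheses $\rangesize = e(\sparsity + \frac{1}{\eps})$ and $t = \log(\sepsparsity/\delta)$, the corollary states that $\sum_i w_i \, DecMin(Sk_{h_{1:t}}(\x),i)$ deviates from $\w^\top \x$ by at least $\eps \cee \|\w\|_1$ with probability at most $\delta$; since this sum is exactly $N(Y)$, this is precisely the claimed tail bound. I do not anticipate a substantive obstacle: all of the probabilistic content is already packaged inside Corollary~\ref{cor:poscor}, and the only point deserving care is confirming that the nonstandard $\min$ gate --- whose minimum ranges only over coordinates carrying nonzero weight --- faithfully reproduces $DecMin$ once fed the binary indicator weights, which follows immediately by substituting these weights into the gate's definition.
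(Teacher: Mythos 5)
Your proposal is correct and matches the paper's (largely implicit) argument exactly: one $\min$ gate per index in $\supp(\w)$ with binary indicator weights selecting the $t$ cells $(h_j(i),j)$, top-layer weights $w_i$, and a direct appeal to Corollary~\ref{cor:poscor}. No gaps.
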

\noindent
For real vectors $\x$, the non-linearity needs to implement a median.
Nonetheless, an analogous result still holds.

\section{Representing Polynomials}

In the boolean case, Theorem~\ref{thm:boolnet} extends immediately to
polynomials. Suppose that the decoding $DecAnd(BSk_{h_{1:t}}(\x), i)$
gives us $x_{i}$ and similarly $DecAnd(BSk_{h_{1:t}}(\x), j)$ gives us
$x_{j}$. Then $x_i \wedge x_j$ is equal to the $\operatorname*{And}$ of the
two decodings. Since each decoding itself is an $\operatorname*{And}$ of $t$
locations in $BSk_{h_{1:t}}(\x)$, the over all decoding for $x_i \wedge x_j$
is an $\operatorname*{And}$ of at most $2t$ locations in the sketch. More
generally, for any set $A$ of indices, the conjunction of the variables in the
set $A$ satisfies
$$\displaystyle \operatorname*{\myAnd}_{i \in A} x_i =
	\operatorname*{\myAnd}_{(l,j) \in T_A} \!\!\!\! Y_{lj}
		\; \mbox{ where } \; T_A = \bigcup_{i \in A} T_i \,.
$$ Since an $\operatorname*{And}$ can be
implemented by a ReLU, the following result holds.
\begin{thm}
\label{thm:conjboolnet}
Given $\w \in \Re^{\sepsparsity}$, and sets
$A_1,\ldots,A_{\sepsparsity}\subseteq [d]$, let $g :
\{0,1\}^{\inputdim} \rightarrow \Re$ denote the polynomial function
\begin{align*}
g(\x) = \sum_{j=1}^\sepsparsity w_j \prod_{i \in A_j} x_i =
	\sum_{j=1}^\sepsparsity w_j \operatorname*{\myAnd}_{i \in A_j} x_i\,.
\end{align*}
Then there exists a set of weights
for a network $N \in \N_{\sepsparsity}(\relu)$ such that for each $\x \in
B_{\inputdim,\sparsity}$,
\begin{align*}
	Pr_{h_{1:{t}}} [N(Sk_{h_{1:t}}(\x)) = g(\x)] \geq 1-\delta \,,
\end{align*}
as long as $\rangesize = e\sparsity$ and $t=\log(|\cup_{j \in [\sepsparsity]} A_j|/\delta)$.
Moreover, the weights coming into each node in the hidden layer are in
$\{0,1\}$ with at most $t\cdot\left(\sum_{j \in [\sepsparsity]} |A_j|\right)$ non-zeroes
overall. In particular, when $g$ is a degree-$\polynomialdegree$
polynomial, we can set $t=\log(p\sepsparsity/\delta)$, and each hidden
unit has at most $\polynomialdegree t$ non-zero weights.
\end{thm}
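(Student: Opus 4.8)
The plan is to build the network directly from the decoding primitives already established, combining the per-coordinate decoding of Theorem~\ref{thm:boolnet} with the observation that a conjunction of decoded bits is again a conjunction of sketch entries. First I would associate one hidden unit to each term $j \in [\sepsparsity]$ of the polynomial $g$, so that the hidden layer has exactly $\sepsparsity$ units, placing us in $\N_{\sepsparsity}(\relu)$. The unit for term $j$ should compute $\operatorname*{\myAnd}_{i \in A_j} x_i$; by the identity stated just before the theorem, this conjunction equals $\operatorname*{\myAnd}_{(l,j') \in T_{A_j}} Y_{lj'}$ where $T_{A_j} = \bigcup_{i \in A_j} T_i$, an $\operatorname*{And}$ over at most $t \cdot |A_j|$ sketch entries. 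The ReLU-with-bias gadget from the linear case implements this $\operatorname*{And}$ exactly, giving $\{0,1\}$ incoming weights with at most $t|A_j|$ nonzeros, and the top-layer weight for unit $j$ is set to $w_j$. Summing over units then yields $\sum_j w_j \operatorname*{\myAnd}_{i \in A_j} x_i = g(\x)$ whenever every decoding is correct.

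The correctness argument reduces to a union bound over the coordinates that actually appear in $g$. The key point is that each individual decoding $DecAnd(BSk_{h_{1:t}}(\x), i)$ equals $x_i$ unless the coordinate $i$ suffers a collision in every one of the $t$ hash tables, an event of probability at most $e^{-t}$ by Theorem~\ref{thm:bsk}. The hidden units reference only coordinates in $\bigcup_{j \in [\sepsparsity]} A_j$, so it suffices that decoding succeeds simultaneously for all such $i$. Setting $t = \log(|\bigcup_{j} A_j|/\delta)$ makes each failure probability $\delta / |\bigcup_j A_j|$, and the union bound over these coordinates gives total failure probability at most $\delta$, as claimed. I would stress that this is a union bound over \emph{coordinates} rather than over terms: even though a coordinate may appear in several $A_j$, its decoding is a single random event shared across all terms that use it, which is exactly why the relevant count is the size of the union of supports.

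The parameter bookkeeping furnishes the remaining claims. The total number of nonzero incoming weights across the hidden layer is $\sum_{j \in [\sepsparsity]} |T_{A_j}| \leq \sum_{j} t |A_j| = t \cdot (\sum_{j} |A_j|)$, giving the stated overall sparsity. For the degree-$\polynomialdegree$ specialization, each $A_j$ has size at most $\polynomialdegree$, so the union $\bigcup_j A_j$ has size at most $\polynomialdegree \sepsparsity$; substituting this bound into the threshold yields $t = \log(\polynomialdegree \sepsparsity / \delta)$, and each hidden unit then has at most $\polynomialdegree t$ nonzero weights. The only subtlety worth checking is that the $\operatorname*{And}$-collapsing identity is genuinely valid over a \emph{multiset} union of index sets---when two coordinates $i, i'$ in the same term hash to the same cell in some table, the corresponding sketch entries coincide and the $\operatorname*{And}$ is unaffected---so the bound $|T_{A_j}| \leq t|A_j|$ is an inequality rather than an equality, which is all we need. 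I expect the main conceptual obstacle to be articulating why the failure probability is controlled by the union of the $A_j$ rather than by the total degree or by $\sum_j |A_j|$; once that is pinned down, every remaining step is the same ReLU gadget and Markov/union-bound machinery used for the linear case.
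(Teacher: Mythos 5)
Your proposal is correct and follows essentially the same route as the paper: represent each monomial's conjunction as an $\operatorname*{And}$ over the sketch entries $T_{A_j}=\bigcup_{i\in A_j}T_i$, implement it with the ReLU-plus-bias gadget, and union-bound the decoding failures over the coordinates in $\bigcup_j A_j$ (which is exactly why $t=\log(|\bigcup_j A_j|/\delta)$ suffices). Your added remarks---that the union bound is over coordinates rather than terms, and that $|T_{A_j}|\le t|A_j|$ is an inequality because of possible hash collisions within a term---are correct refinements of points the paper leaves implicit.
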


This is a setting where we get a significant advantage over
proper learning. To our knowldege, there is no analog of this result for Gaussian projections. Classical sketching approaches would use a sketch of $\x^{\tensor \polynomialdegree}$, which is a $\sparsity^{\polynomialsparsity}$-sparse vector over binary vectors of
dimension $\inputdim^{\polynomialsparsity}$.  Known sketching techniques such as~\cite{PhamP2013}
would construct a sketch of size $\Omega(\sparsity^{\polynomialsparsity})$. Practical techniques such as Vowpal Wabbit also construct cross features by explicitly building them and have this exponential dependence. In stark contrast,  neural networks allow us to get away with a logarithmic dependence on $p$.

\section{Deterministic Sketching}

A natural question that arises is whether the parameters above can improved. We show that if we allow large scalars in the sketches, one can construct a deterministic $(2k+1)$-dimensional sketch from which a shallow network can reconstruct any monomial. We will also show a lower bound of $k$ on the required dimensionality. 

For every $\x\in B_{d,k}$ define a degree $2k$ univariate real
polynomial by,
\begin{eqnarray*}
p_\x(z) &=& 1 - (k+1)\!\!\prod_{\{i\mid x_i=1\}}(z-i)^2.
\end{eqnarray*}
It is easy to verify that this construction satifies the following.
\begin{claim}
\label{cla:polyval}
Suppose that $\x \in B_{d,k}$, and let $p_\x(\cdot)$ be defined as above. If $x_j = 1$, then $p_\x(j)=1$. If $x_j=0$, then $p_\x(j)\le - k$.
\end{claim}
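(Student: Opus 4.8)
The claim concerns the polynomial
\[
p_\x(z) = 1 - (k+1)\prod_{\{i \mid x_i = 1\}}(z-i)^2,
\]
and we must verify two things: that $p_\x(j) = 1$ when $x_j = 1$, and that $p_\x(j) \le -k$ when $x_j = 0$.

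\textbf{Case $x_j = 1$.} The plan is to observe that $j$ is one of the indices $i$ with $x_i = 1$, so the factor $(z-i)^2$ corresponding to $i = j$ appears in the product. Evaluating at $z = j$ makes this factor $(j-j)^2 = 0$, so the entire product vanishes. Hence $p_\x(j) = 1 - (k+1)\cdot 0 = 1$, as desired.

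\textbf{Case $x_j = 0$.} Here $j$ is \emph{not} among the indices in the product, so every factor $(j - i)^2$ is evaluated at an integer $i \ne j$. Since $i$ and $j$ are distinct integers in $[d]$, each $(j-i)^2 \ge 1$, and because these are squares the whole product is a nonnegative integer that is at least $1$ (it is a product of terms each $\ge 1$, and there is at least one factor as long as $\x \ne \mathbf 0$; if $\x = \mathbf 0$ the product is empty and equals $1$, giving $p_\x(j) = 1 - (k+1) = -k$, which still satisfies the bound). Thus
\[
p_\x(j) = 1 - (k+1)\prod_{\{i \mid x_i = 1\}}(j-i)^2 \le 1 - (k+1)\cdot 1 = -k.
\]

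\textbf{Main obstacle.} There is essentially no hard step here; the verification is elementary and follows directly from the two defining features of the construction, namely that squaring forces each factor to be nonnegative and that distinct integer arguments force each factor to be at least $1$. The only point requiring a moment of care is the boundary behavior when $\x$ is the all-zeros vector (an empty product), which I would handle by noting the empty product equals $1$ so the bound $p_\x(j) \le -k$ still holds with equality. I would present the two cases as above, emphasizing that the factor of $(k+1)$ is chosen precisely so that a single unit-sized factor already pushes the value below $-k$.
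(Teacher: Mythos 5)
Your proof is correct and is exactly the elementary verification the paper has in mind (the paper merely asserts the claim with ``it is easy to verify'' and gives no written proof). Both cases are handled properly, and your attention to the empty-product boundary case is a nice touch that the paper omits.
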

Let the coeffients of $p_\x(z)$ be $a_i(\x)$ so that
\begin{eqnarray*}
p_\x(z) \eqdef	\sum^{2k}_{i=0} a_{i}(\x)z^i \,.
\end{eqnarray*}
Define the deterministic sketch $DSk_{d,k}:B_{d,k}\to \Re^{2k+1}$ as follows,
\begin{equation}\label{eq:det_sk_pol_def}
DSk_{d,k}(\x) = (a_{\x,0},\ldots,a_{\x,2k})
\end{equation}
For a non-empty subset $A\subset [d]$ and $\y\in\Re^{2k+1}$ define
\begin{equation}
DecPoly_{d,k}(\y,A) = \frac{\displaystyle
	\sum_{j\in A} \sum_{i=0}^{2k} y_i j^i }{|A|} \,.
\end{equation}
\begin{thm}
For every $\x \in B_{d,k}$ and a non-empty set $A\subset [d]$ we have
\[
\prod_{j\in A}x_j = \relu(DecPoly_{d,k}(DSk_{d,k}(\x),A)) \,.
\]
\end{thm}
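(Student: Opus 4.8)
The plan is to reduce the claimed identity to Claim~\ref{cla:polyval} by unwinding the definitions of $DSk_{d,k}$, $DecPoly_{d,k}$, and the polynomial $p_\x$. First I would observe that for any fixed index $j \in [d]$, evaluating the sketch against the monomial weights recovers $p_\x(j)$: by definition of the deterministic sketch, the $i$'th coordinate of $DSk_{d,k}(\x)$ is the coefficient $a_i(\x)$, so the inner sum $\sum_{i=0}^{2k} y_i j^i$ with $\y = DSk_{d,k}(\x)$ is exactly $\sum_{i=0}^{2k} a_i(\x) j^i = p_\x(j)$. Hence
\begin{align*}
DecPoly_{d,k}(DSk_{d,k}(\x),A) = \frac{1}{|A|}\sum_{j\in A} p_\x(j)\,.
\end{align*}

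Next I would do a case split on whether the monomial $\prod_{j\in A} x_j$ is $1$ or $0$, and show that the $\relu$ of the average pins down the right value in each case. In the first case, every $j\in A$ has $x_j = 1$, so Claim~\ref{cla:polyval} gives $p_\x(j)=1$ for all $j\in A$; the average is then exactly $1$, and $\relu(1)=1$, matching $\prod_{j\in A}x_j = 1$. In the second case, at least one $j^\ast\in A$ has $x_{j^\ast}=0$, so $p_\x(j^\ast)\le -k$ by the claim, while the remaining at most $|A|-1$ terms each satisfy $p_\x(j)\le 1$ (the maximum value of $p_\x$). Thus the numerator is at most $-k + (|A|-1)\cdot 1 \le |A|-1-k$, and since $|A|\le d$ is finite I must argue this is nonpositive so that the $\relu$ clamps it to $0$, matching $\prod_{j\in A}x_j = 0$.

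The main obstacle I anticipate is precisely the bookkeeping in the second case: I need a clean upper bound $p_\x(j)\le 1$ valid for every $j$, not just for $j$ in the support. This follows directly from the definition $p_\x(z) = 1 - (k+1)\prod_{\{i\mid x_i=1\}}(z-i)^2$, since the subtracted term is a nonnegative multiple of a square and hence $\ge 0$, giving $p_\x(z)\le 1$ everywhere. With that in hand, the numerator of the average is bounded by $-k + (|A|-1)$, and I would want $|A| \le k+1$ to conclude nonpositivity. Here I would use that $\x\in B_{d,k}$ is $k$-sparse, so the relevant sets $A$ indexing a monomial that could evaluate to a nonzero value are small; more simply, in the zero case I only need $-k + (|A|-1) \le 0$, and I should verify that the intended regime (e.g. $|A|\le k$, matching the degree-$k$ monomials the sketch is designed for) makes this hold, so that $\relu$ correctly outputs $0$. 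Assembling the two cases yields $\prod_{j\in A}x_j = \relu(DecPoly_{d,k}(DSk_{d,k}(\x),A))$, completing the proof.
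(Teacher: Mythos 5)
Your reduction to the average $\frac{1}{|A|}\sum_{j\in A}p_\x(j)$ and your treatment of the case $\prod_{j\in A}x_j=1$ match the paper exactly. The gap is in the second case. You bound every term other than $p_\x(j^\ast)$ by $1$, obtaining a numerator of at most $|A|-1-k$, and you then need $|A|\le k+1$ to conclude nonpositivity. But the theorem is stated for \emph{every} non-empty $A\subset[d]$, with no bound on $|A|$, and the subsequent corollaries apply it to arbitrary index sets $A_j$; there is no ``intended regime'' you can appeal to that restricts $|A|$ to $k$. As written, your argument does not establish the claim when $|A|>k+1$.

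The missing idea is to control the positive contribution using the sparsity of $\x$ rather than the size of $A$. By Claim~\ref{cla:polyval}, for an integer index $j$ the value $p_\x(j)$ is either exactly $1$ (when $x_j=1$) or at most $-k$ (when $x_j=0$); in particular it is positive only when $x_j=1$, which happens for at most $k$ indices since $\x\in B_{d,k}$. Hence the positive terms in $\sum_{j\in A}p_\x(j)$ contribute at most $k$ in total, the term $p_\x(j^\ast)\le -k$ already cancels that, and every additional index in $A$ with $x_j=0$ only decreases the sum further. This gives $\sum_{j\in A}p_\x(j)\le 0$ for arbitrary non-empty $A$, so the ReLU of the average is $0$ as required; this sharper accounting is exactly how the paper closes the case.
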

\begin{proof}
We have that
\begin{eqnarray*}
DecPoly_{d,k}(DSk_{d,k}(\x),A) &=&
	\frac{\displaystyle \sum_{j\in A}\sum_{i=0}^{2k} a_{\x,i}j^i }{|A|} \\
&=& \frac{\sum_{j\in A}p_\x(j)}{|A|} \,.
\end{eqnarray*}
In words, the decoding is the average value of $p_\x(j)$ over the indices $j \in A$. Now first suppose that $\prod_{j\in A}x_j=1$. Then for each $j\in A$ we have $x_j = 1$ so that by Claim~\ref{cla:polyval}, $p_\x(j)=1$. Thus the average $DecPoly_{d,k}(DSk_{d,k}(\x),A)=1$.

On the other hand, if $\prod_{j\in A}x_j=0$ then for some $j\in A$, say $j^*$, $x_{j^*} = 0$. In this case, Claim~\ref{cla:polyval} implies that $p_\x(j^*)\le -k$. For every other $j$, $p_\x(j) \leq 1$, and each $p_\x(j)$ is non-negative only when $x_j = 1$, which happens for at most $k$ indices $j$. Thus the sum over non-negative $p_\x(j)$ can be no larger than $k$. Adding $p_\x(j^*)$ gives us zero, and any additional $j$'s can only further reduce the sum. Thus the average in non-positive and hence the Relu is zero, as claimed.
\end{proof}
The last theorem shows that $B_{d,k}$ can be sketched in $\Re^{q}$, where
$q=2k+1$, such that arbitrary products of variables be decoded by applying
a linear function followed by a ReLU. It is natural to ask what is
the smallest dimension $q$ for which such a sketch exists. The following
theorem shows that $q$ must be at least $k$. In fact, this is true even if we
only require to decode single variables.
\begin{thm}
Let $Sk:B_{d,k}\to\Re^q$ be a mapping such that for every $i\in [d]$ there is
$\w_i\in\Re^q$ satisfying $x_i = \relu(\langle\w_i, Sk(\x)\rangle)$ for each
$\x\in B_{d,k}$, then $q$ is at least $k$.
\end{thm}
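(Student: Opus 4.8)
The plan is to convert the per-coordinate decoding guarantee into a rank lower bound on a single linear map, so that $q$, the ambient dimension of the sketch, is forced to be at least $k$. I work only with inputs supported on the first $k$ coordinates, which presumes $d\ge k$ (the regime in which the statement is meaningful). For $S\subseteq[k]$ write $\x_S=\sum_{i\in S}\e_i$; since $|S|\le k$ we have $\x_S\in B_{d,k}$, and I abbreviate its sketch by $v_S=Sk(\x_S)\in\Re^q$.

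First I bundle the $k$ relevant decoders into one linear map $\Phi:\Re^q\to\Re^k$ defined by $\Phi(v)=(\langle\w_1,v\rangle,\ldots,\langle\w_k,v\rangle)$. Being linear out of $\Re^q$, its image is a subspace of dimension at most $q$. Reading the hypothesis $x_i=\relu(\langle\w_i,Sk(\x)\rangle)$ on the inputs $\x_S$ gives the decisive combinatorial fact: the $i$-th coordinate of $\Phi(v_S)$ equals $1$ when $i\in S$, and is $\le 0$ when $i\notin S$ (since the ReLU of that value must vanish).

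The crux is to produce $k$ linearly independent vectors inside $\mathrm{Image}(\Phi)$. Taking $S=[k]$ yields $\Phi(v_{[k]})=\one$, while taking $S=[k]\setminus\{i\}$ yields a vector whose $j$-th coordinate is $1$ for $j\ne i$ and $\le 0$ for $j=i$. Subtracting and using linearity,
\[
\Phi(v_{[k]})-\Phi(v_{[k]\setminus\{i\}})=\Phi(v_{[k]}-v_{[k]\setminus\{i\}})=c_i\,\e_i,\qquad c_i\ge 1>0,
\]
so each of these $k$ differences lies in $\mathrm{Image}(\Phi)$. The vectors $c_1\e_1,\ldots,c_k\e_k$ are nonzero multiples of distinct standard basis vectors, hence linearly independent, forcing $\dim\mathrm{Image}(\Phi)\ge k$ and therefore $q\ge k$.

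I do not anticipate a genuine technical obstacle; the argument is at heart a dimension count, and the one-sided inequality on off-support coordinates is exactly what a difference of a nested pair $([k],[k]\setminus\{i\})$ needs to collapse into a clean positive multiple of $\e_i$. The only real choice --- and thus the ``idea'' of the proof --- is this family of test inputs: less structured choices (e.g.\ the singletons $\{i\}$) fail to isolate one coordinate at a time and do not directly exhibit the independent directions.
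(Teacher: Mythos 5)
Your proof is correct, and it takes a genuinely different route from the paper's. The paper argues via VC dimension: it views each sketch $Sk(\x)$ as a homogeneous linear separator acting on the points $\w_1,\ldots,\w_d\subset\Re^q$, shows that the resulting class shatters $\{\w_1,\ldots,\w_k\}$ by taking $\x$ to be the indicator vector of an arbitrary $B\subseteq[k]$, and concludes $k\le \mathrm{VC}\le q$ from the standard bound on halfspaces. You instead bundle the decoders into a single linear map $\Phi:\Re^q\to\Re^k$ and exhibit $k$ independent directions in its image via the differences $\Phi(v_{[k]})-\Phi(v_{[k]\setminus\{i\}})=c_i\e_i$ with $c_i\ge 1$, so the conclusion is a plain rank bound. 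Your version is more elementary and self-contained (no appeal to the VC dimension of halfspaces, and only $k+1$ test inputs rather than all $2^k$ indicators), and it handles the boundary case cleanly, whereas the paper's step $\mathrm{sign}(\langle\w_i,Sk(\x)\rangle)=\mathrm{sign}(\relu(\langle\w_i,Sk(\x)\rangle))$ is slightly loose about the convention for $\mathrm{sign}(0)$. The trade-off is robustness: your cancellation in the coordinates $j\ne i$ relies on the decoded value being \emph{exactly} $1$ on the support, while the shattering argument only uses the signs of $\langle\w_i,Sk(\x)\rangle$ and therefore survives weaker, threshold-style decoding guarantees --- which is also why the same VC/Sauer--Shelah machinery powers the paper's error-tolerant lower bound for proper learning in Section~8.
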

\begin{proof}
Denote $X=\{\w_1,\ldots,\w_d\}$ and let $\mathcal H \subset \{0,1\}^X$ be the
function class consisting of all functions of the form
$h_{\x}(\w_i)=\mathrm{sign} (\langle\w_i, Sk(\x)\rangle)$ for $\x\in B_{d,k}$.
On one hand, $\mathcal H$ is a sub-class of the class of linear separators
over $X\subset\Re^q$, hence $\mathrm{VC}(\mathcal H) \le q$. On the other
hand, we claim that $\mathrm{VC}(\mathcal H)\ge k$ which establishes the
proof. In order to prove the claim it suffices to show that
the set $A=\{\w_1,\ldots, \w_k\}$ is shattered. Let
$B\subset A$ let $\x$ be the indicator vector of $B$. We claim that the
restriction of $h_\x$ to $A$ is the indicator function of $B$. Indeed,
we have that,
\begin{eqnarray*}
h_{\x}(\w_i) &=& \mathrm{sign} (\langle\w_i, Sk(\x)\rangle)
\\
&=& \mathrm{sign} (\relu (\langle\w_i, Sk(\x)\rangle))
\\
&=& x_i
\\
&=& \one[i\in B]
\end{eqnarray*}
\end{proof}
We would like to note that both the endcoding and the decoding of
determinitic sketched can be computed efficiently, and the dimension
of the sketch is smaller than the dimension of a random sketch.
We get the following corollaries.
\begin{cor}
\label{cor:boolnet_det}
For every $\w \in \Hyp_{\inputdim,\sepsparsity}$  there exists a set of weights
for a network $N \in \N_{\sepsparsity}(\relu)$ such that for each $\x \in
B_{\inputdim,\sparsity}$, 
$	N(DSk_{d,k}(\x)) = \w^\top \x$.
\end{cor}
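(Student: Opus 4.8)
The plan is to obtain this directly from the deterministic decoding theorem established just above, specializing the decoded set $A$ to singletons. For each index $i\in\supp(\w)$, I would invoke that theorem with $A=\{i\}$, which yields $x_i = \relu(DecPoly_{\inputdim,\sparsity}(DSk_{\inputdim,\sparsity}(\x),\{i\}))$ for every $\x\in B_{\inputdim,\sparsity}$. The point of this specialization is that a single ReLU gate suffices to read off one coordinate of $\x$ from the deterministic sketch, exactly as the conjunction decoding did in the randomized setting.

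The next step is to observe that the pre-activation here is a fixed linear function of the sketch, so it is precisely what a hidden unit of a one-hidden-layer network computes. Unwinding the definition of $DecPoly$ on the singleton $\{i\}$ gives $DecPoly_{\inputdim,\sparsity}(\y,\{i\}) = \sum_{l=0}^{2k} y_l\, i^l$, a linear form in the sketch coordinates $y_0,\dots,y_{2k}$ with constant coefficients $1,i,i^2,\dots,i^{2k}$. I would therefore assign to the hidden unit indexed by $i$ the incoming weight vector $(1,i,i^2,\dots,i^{2k})$ acting on $DSk_{\inputdim,\sparsity}(\x)$, so that after applying $\relu$ the unit outputs exactly $x_i$ (using Claim~\ref{cla:polyval} through the preceding theorem, since the linear form equals $p_\x(i)$).

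Finally I would wire the output layer by setting the top-layer weight on the unit for $i$ equal to $w_i$; the network then computes $\sum_{i\in\supp(\w)} w_i\, x_i = \w^\top\x$, with \emph{exact} equality for every $\x\in B_{\inputdim,\sparsity}$, since the deterministic sketch carries no failure probability (in contrast to Theorem~\ref{thm:boolnet}). The number of hidden units equals $|\supp(\w)|\le\sepsparsity$, so $N\in\N_{\sepsparsity}(\relu)$ as required. I do not expect a genuine obstacle: the only point needing care is the elementary verification that $DecPoly$ on a singleton is genuinely linear in the sketch coordinates, so that it can serve as a ReLU pre-activation; once that is noted, the construction is an immediate assembly of one decoding gate per nonzero coordinate of $\w$, followed by a linear readout.
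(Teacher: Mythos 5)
Your proposal is correct and matches the paper's intended (unwritten) derivation: specialize the deterministic decoding theorem to singletons $A=\{i\}$ for $i\in\supp(\w)$, note that $DecPoly_{\inputdim,\sparsity}(\y,\{i\})=\sum_{l=0}^{2k}y_l\,i^l$ is linear in the sketch so each decoding is one ReLU hidden unit, and read out with top-layer weights $w_i$. The verification via Claim~\ref{cla:polyval} that the pre-activation equals $p_\x(i)$, hence the unit outputs exactly $x_i$, is exactly the mechanism the paper relies on.
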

\begin{cor}
Given $\w \in \Re^{\sepsparsity}$, and sets
$A_1,\ldots,A_{\sepsparsity}\subseteq [d]$, let $g :
\{0,1\}^{\inputdim} \rightarrow \Re$ denote the polynomial function
\begin{align*}
g(\x) = \sum_{j=1}^\sepsparsity w_j \prod_{i \in A_j} x_i =
	\sum_{j=1}^\sepsparsity w_j \operatorname*{\myAnd}_{i \in A_j} x_i\,.
\end{align*}
For any such $g$, there exists a set of weights
for a network $N \in \N_{\sepsparsity}(\relu)$ such that for each $\x \in
B_{\inputdim,\sparsity}$, 
$	N(DSk_{h_{1:t}}(\x)) = g(\x)$.
\end{cor}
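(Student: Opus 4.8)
The plan is to invoke the deterministic-sketch theorem proved immediately above as a black box. That theorem already shows that a single ReLU unit reading $DSk_{d,k}(\x)$ can compute any single monomial $\prod_{i\in A}x_i$ exactly; all that remains is to place one such unit per term of $g$ and collect the weighted sum at the output neuron. Since the sketch is deterministic, the resulting identity will hold exactly, with no failure probability, so this is the deterministic analogue of Theorem~\ref{thm:conjboolnet}.

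First I would fix $j\in[\sepsparsity]$ and observe that $DecPoly_{d,k}(\y,A_j)=\frac{1}{|A_j|}\sum_{l\in A_j}\sum_{i=0}^{2k}y_i l^i$ is a \emph{fixed linear functional} of $\y\in\Re^{2k+1}$. Writing its coefficients as $u^{(j)}_i=\frac{1}{|A_j|}\sum_{l\in A_j}l^i$, we have $DecPoly_{d,k}(\y,A_j)=\langle u^{(j)},\y\rangle$, and the theorem gives $\relu(\langle u^{(j)},DSk_{d,k}(\x)\rangle)=\prod_{i\in A_j}x_i$ for every $\x\in B_{d,k}$. Next I would assemble $N$: its hidden layer has $\sepsparsity$ ReLU units, the $j$-th carrying incoming weight vector $u^{(j)}$ and zero bias, while the output neuron forms the linear combination with weights $w_j$. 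Then on input $\y=DSk_{d,k}(\x)$,
\[
N(DSk_{d,k}(\x))=\sum_{j=1}^{\sepsparsity}w_j\,\relu(\langle u^{(j)},DSk_{d,k}(\x)\rangle)=\sum_{j=1}^{\sepsparsity}w_j\prod_{i\in A_j}x_i=g(\x).
\]
Because the hidden layer has exactly $\sepsparsity$ units and a linear output, $N\in\N_{\sepsparsity}(\relu)$, completing the argument.

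There is essentially no obstacle here, since the construction is a direct composition: the only point needing a word of care is that the cited theorem assumes each $A_j$ is non-empty. A term with $A_j=\emptyset$ corresponds to an empty product, i.e.\ the constant $1$; this can be folded into a bias at the output layer (or handled by a single always-on dedicated unit), so the restriction is harmless and the bound $N\in\N_{\sepsparsity}(\relu)$ is unaffected. I would also note for the reader that, in contrast to Theorem~\ref{thm:conjboolnet}, the guarantee is exact and deterministic, and the sketch dimension $2k+1$ is independent of the number of terms $\sepsparsity$ and of the degree of $g$.
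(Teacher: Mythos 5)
Your proof is correct and is exactly the argument the paper intends: the corollary is stated without proof as an immediate consequence of the preceding deterministic-sketch theorem, and your construction---one ReLU unit per monomial with incoming weights given by the linear functional $DecPoly_{d,k}(\cdot,A_j)$, summed with coefficients $w_j$ at the output---is the evident derivation (note the paper's $DSk_{h_{1:t}}$ in the corollary statement is a typo for $DSk_{d,k}$, which you correctly use). Your remarks on empty $A_j$ and on the exactness/determinism of the guarantee are accurate but inessential additions.
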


Known lower bounds for compressed sensing~\cite{BaIPW2010} imply that any linear sketch has size at least $\Omega(k \log \frac{d}{k})$ to allow stable recovery. We leave open the question of whether one can get the compactness and decoding properties of our (non-linear) sketch while ensuring stability.

\section{Lower Bound for Proper Learning}
\label{sec:lbproper}
We now show that if one does not expand the hypothesis class, then even in
the simplest of settings of linear classifiers over $1$-sparse vectors, the
required dimensionality of the projection is much larger than the dimension
needed for improper learning. The result is likely folklore and we present
a short proof for completeness using concrete constants in the theorem and
its proof below.

\begin{thm}
Suppose that there exists a distribution over maps
$\phi: B_{\inputdim, 1} \rightarrow \Re^q$ and
$\psi : B_{\inputdim, \sepsparsity} \rightarrow \Re^q$ such that for
any $\x \in B_{\inputdim, 1}, \w \in B_{\inputdim, \sepsparsity}$,
\begin{align*}
  \Pr\left[sgn\left(\w^\top \x - \frac{1}{2}\right) =
		sgn\left(\psi(\w)^\top \phi(\x)\right)\right] \geq \frac{9}{10}\,,
  \end{align*}
where the probability is taken over sampling $\phi$ and $\psi$ from the distribuion. Then $q$ is $\Omega(\sepsparsity)$.
  \end{thm}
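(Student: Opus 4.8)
The plan is to reduce the statement to a combinatorial counting bound for homogeneous halfspace labelings. First I would specialize the maps to a hard instance: restrict $\x$ to the unit vectors $\e_1,\dots,\e_{\sepsparsity}\in B_{\inputdim,1}$ and, for every subset $B\subseteq[\sepsparsity]$, let $\w_B\in B_{\inputdim,\sepsparsity}$ be its indicator vector. Since $\w_B^\top\e_i=\one[i\in B]$, the target sign $sgn(\w_B^\top\e_i-\tfrac12)$ equals $+1$ when $i\in B$ and $-1$ otherwise, so the target label vector $T_B\in\{\pm1\}^{\sepsparsity}$ defined by $(T_B)_i=sgn(\w_B^\top\e_i-\tfrac12)$ is the signed indicator of $B$. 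Crucially, the $2^{\sepsparsity}$ vectors $T_B$ are all distinct and $B\mapsto T_B$ is injective.

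Second, I would pass from the distributional guarantee to a single good realization by averaging. Taking expectations of the per-pair failure probability over the uniform distribution on the $\sepsparsity\cdot 2^{\sepsparsity}$ pairs $(\e_i,\w_B)$, the hypothesis gives expected failure fraction at most $1/10$, so some fixed realization $(\phi,\psi)$ achieves failure fraction at most $1/10$. For this realization set $L_B\in\{\pm1\}^{\sepsparsity}$ with $(L_B)_i=sgn(\psi(\w_B)^\top\phi(\e_i))$; then the average over $B$ of the Hamming distance $d_H(L_B,T_B)$ is at most $\sepsparsity/10$, so by Markov more than $2^{\sepsparsity-1}$ subsets $B$ satisfy $d_H(L_B,T_B)\le\sepsparsity/5$. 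Call this collection $S$.

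Third comes the core counting step. Each $L_B$ is a labeling of the fixed point set $\{\phi(\e_1),\dots,\phi(\e_{\sepsparsity})\}\subset\Re^q$ by the homogeneous halfspace with normal $\psi(\w_B)$; since homogeneous halfspaces in $\Re^q$ have VC dimension $q$, Sauer--Shelah bounds the number of distinct values taken by $L_B$ by $N_q:=\sum_{j=0}^{q}\binom{\sepsparsity}{j}\le(e\sepsparsity/q)^q$ (for $q\le\sepsparsity$; the case $q\ge\sepsparsity$ is trivially $\Omega(\sepsparsity)$). Grouping the $B\in S$ by the value of $L_B$, all targets $T_B$ in a single group lie within a common Hamming ball of radius $\sepsparsity/5$ and are pairwise distinct, so each group has at most $\sum_{j\le\sepsparsity/5}\binom{\sepsparsity}{j}\le 2^{H_2(1/5)\sepsparsity}$ members, where $H_2$ is the binary entropy. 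By injectivity of $B\mapsto T_B$, $2^{\sepsparsity-1}<|S|\le N_q\cdot 2^{H_2(1/5)\sepsparsity}$, whence $N_q>2^{(1-H_2(1/5))\sepsparsity-1}$ with $1-H_2(1/5)>0.27$. Comparing with $N_q\le(e\sepsparsity/q)^q$ gives $q\log(e\sepsparsity/q)=\Omega(\sepsparsity)$, which forces $q=\Omega(\sepsparsity)$.

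I expect the main obstacle to be the second step: converting a guarantee that holds per pair with probability $9/10$ into a single deterministic embedding on which many subsets are simultaneously recovered. Averaging only yields that a constant fraction of subsets are \emph{approximately} decoded (small Hamming error), not exactly, so one cannot simply invoke shattering and read off $\mathrm{VC}(\mathcal H)\ge\sepsparsity$ as in the deterministic lower bound above. The remedy is exactly the Hamming-ball volume bound, and it succeeds only because the slack $1-H_2(1/5)>0$ leaves a positive exponential gap between $2^{\sepsparsity}$ and the product of the labeling count and the ball volume. The remaining constants (the radius $\sepsparsity/5$ and the Markov threshold $1/2$) can be retuned but are not delicate.
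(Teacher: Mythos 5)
Your proof is correct, and while it shares the overall skeleton of the paper's argument (restrict to the coordinate vectors $\e_1,\dots,\e_{\sepsparsity}$ and indicator weight vectors, average over the distribution to fix one good realization of $(\phi,\psi)$, and finish with Sauer--Shelah applied to halfspaces over $\{\phi(\e_i)\}$), the combinatorial core is genuinely different. The paper works with a code: a family of $\kappa=2^{c\sepsparsity}$ supports with pairwise symmetric difference at least $\sepsparsity/4$, orders them by error-set size, and shows that at most $4\kappa/5$ of them can ``collide'' into a previously seen labeling, so at least $\kappa/5$ distinct labelings survive. You instead take all $2^{\sepsparsity}$ subsets and bound, via the Hamming-ball volume $2^{H_2(1/5)\sepsparsity}$, how many distinct targets $T_B$ can share a single observed labeling $L_B$; the positive gap $1-H_2(1/5)>0$ then forces exponentially many distinct labelings. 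Your version avoids invoking the existence of a good code (though the entropy estimate you use is the same one that underlies Gilbert--Varshamov) and dispenses with the paper's ordering and ``lost index'' bookkeeping in favor of a direct pigeonhole; the paper's version lands on the cleaner final inequality $\sum_{t\le q}\binom{\sepsparsity}{t}\ge 2^{c\sepsparsity}/5$, whereas you need the additional (routine) observation that $q\log(e\sepsparsity/q)=\Omega(\sepsparsity)$ forces $q=\Omega(\sepsparsity)$. One cosmetic nit: Markov at threshold $\sepsparsity/5$ yields \emph{at least} $2^{\sepsparsity-1}$ good subsets rather than ``more than,'' which does not affect the conclusion.
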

\begin{proof}
If the error is zero, a lower bound on $q$ would follow from standard VC
dimension arguments. Concretely, the hypothesis class consisting of
$h_\w(\x) = \{\e_i^\top\x : w_i = 1\}$ for all $\w \in B_{\inputdim, \sepsparsity}$ shatters the set
$\{\e_1,\ldots,\e_s\}$. If $sgn(\psi(\w)^\top \phi(\e_i)) = sgn(\w^\top\x - \frac 1 2)$ for each
$\w$ and $\e_i$, then the points $\phi(\e_i), i \in [s]$ are shattered by
$h_{\psi(\w)}(\cdot)$ where $\w \in B_{\inputdim, \sepsparsity}$, which is a
subclass of linear separators in $\Re^q$. Since linear separators in
$\Re^{q}$ have VC dimension $q$, the largest shattered set is no larger, and
thus $q \geq s$.

To handle errors, we will use the Sauer-Shelah lemma and show that the set
$\{\phi(\e_i) : i \in [s]\}$ has many partitions. To do so, sample $\phi, \psi$
from the distribution promised above and consider the set of points
$A = \{\phi(\e_{1}), \phi(\e_{2}), \ldots, \phi(\e_{\sepsparsity})\}$. Let
$W = \{\w_{1}, \w_{2}, \ldots, \w_{\kappa}\}$ be a set of $\kappa$ vectors in
$B_{\inputdim, \sepsparsity}$ such that,
\begin{description}
	\item{(a)} $\supp(\w_{i}) \subseteq [s]$
	\item{(b)} {\em Distance property} holds: $|\supp(\w_{i}) \bigtriangleup
\supp(\w_{j})| \geq  \frac{s}{4}$ for $i \neq j$.
\end{description}
Such a collection of vectors, with $\kappa = 2^{cs}$ for a positive constant $c$, can be shown to exist by a probabilistic
argument or by standard constructions in coding theory. Let
$H = \{ h_{\psi(\w)} : \w \in W\}$ be the linear separators defined by
$\psi(\w_{i})$ for $i \in W$. For brevity, we denote $h_{\psi(\w_{j})}$ by
$h_{j}$. We will argue that $H$ induces many different subsets of $A$.

Let $A_{j} = \{\y \in A : h_j(\x) = 1\} =
\{\x \in A: sgn(\psi(\w_{j})^{\top} \x) = 1\}$.
Let $E_{j} \subseteq A$ be the positions where the embeddings
$\phi,\psi$ fail, that is,
\begin{align*}
E_{j} = \{\phi(\e_{i}) : i \in [s], sgn(\w_{j}^{\top} \e_{i} - \frac 1 2) \neq sgn(\psi(\w_{j})^{\top} \phi(\e_{i})\}.
\end{align*}
Thus $A_{j} = \supp(w_{j}) \bigtriangleup E_{j}$. By assumption, $\Ex[|E_{j}|] \leq \frac{s}{10}$ for each $j$, where the expectation is taken over the choice of $\phi, \psi$. Thus $\sum_{j}\Ex[|E_{j}|] \leq s\kappa/10$. Renumber the $h_{j}$'s in increasing order of $|E_{j}|$ so that $|E_{1}| \leq |E_{2}| \leq \ldots \leq |E_{\kappa}|$. Due to the $E_{j}$'s being non-empty, not all $A_{j}$'s are necessarily distinct. Call a $j \in [\kappa]$ {\em lost} if $A_{j} = A_{j'}$ for some $j' \leq j$.

By definition, $A_{j} = \supp(\w_{j}) \bigtriangleup E_{j}$. If $A_{j} =
A_{j'}$, then the distance property implies that $E_{j} \bigtriangleup
E_{j'} \geq \frac{s}{4}$. Since the $E_{j}$'s are increasing in size, it
follows that for any lost $j$, $|E_{j}| \geq \frac{s}{8}$. Thus in
expectation at most $4\kappa/5$ of the $j$'s are lost. It follows that there is a
choice of $\phi,\psi$ in the distribution for which $H$ induces $\kappa/5$
distinct subsets of $A$. Since the VC dimension
of $H$ is at most $q$, the Sauer-Shelah lemma says that
\begin{align*}
\sum_{t \leq q} {s \choose t} \geq \frac{\kappa}{5} = \frac{2^{cs}}{5}\,.
\end{align*}
This implies that $q \geq c's$ for some absolute constant $c'$.
\end{proof}

Note that for the setting of the above example, once we scale the $\w_{j}$'s
to be unit vectors, the margin is $\Theta(\frac{1}{\sqrt{s}})$.
Standard results then imply that projecting to $\frac{1}{\gamma^{2}} =
\Theta(s)$ dimension suffices, so the above bound is tight.

For this setting, Theorem~\ref{thm:boolnet} implies that a sketch of size
$O(\log (\sepsparsity/\delta))$ suffices to correctly classify
$1\!-\!\delta$ fraction of the examples if one allows improper learning as
we do.

\section{Neural Nets on Boolean Inputs}
In this short section show that for boolean inputs (irrespective of
sparsity), any polynomial with
$\polynomialsparsity$ monomials can be represented by a neural network with one hidden layer of
$\polynomialsparsity$
hidden units. Our result is a simple improvement of Barron's theorem~ \cite{Barron1993,Barron1994}, for the
special case of sparse polynomial functions on $0$-$1$ vectors.
In contrast, Barron's theorem,
which works for arbitrary inputs, would require a neural network of size
$\inputdim\cdot\polynomialsparsity\cdot\polynomialdegree^{O(\polynomialdegree)}$ to learn an $\polynomialsparsity$-sparse degree-$\polynomialdegree$ polynomial.
The proof of the improvement is elementary and provided for completeness.
\begin{thm}
Let $\x \in \{0,1\}^\inputdim$, and  let $g :
\{0,1\}^{\inputdim} \rightarrow \Re$ denote the polynomial function
\begin{align*}
g(\x) = \sum_{j=1}^\sepsparsity w_j \prod_{i \in A_j} x_i =
	\sum_{j=1}^\sepsparsity w_j \operatorname*{\myAnd}_{i \in A_j} x_i\,.
\end{align*}
Then there exists a set of weights
for a network $N \in \N_{\sepsparsity}(\relu)$ such that for each $\x \in
\{0,1\}^\inputdim$, $N(\x) = g(\x)$.
Moreover, the weights coming into each node in the hidden layer are in
$\{0,1\}$.
\end{thm}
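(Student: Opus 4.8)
The plan is to reuse the ``ReLU computes AND'' gadget already exploited in Section~\ref{sec:sparse_linear}, but now applied directly to the raw boolean input $\x$ rather than to a sketch. The observation that drives everything is that for $\x \in \{0,1\}^\inputdim$ the monomial $\prod_{i \in A_j} x_i$ coincides with the conjunction $\myAnd_{i \in A_j} x_i$, and a single rectified linear unit can compute such a conjunction exactly. Since there is no sketching step, there is also no collision to worry about, so the resulting identity will be exact rather than holding only with high probability.

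Concretely, I would assign one hidden unit to each term $j \in [\sepsparsity]$ of $g$, giving a network in $\N_{\sepsparsity}(\relu)$ with exactly $\sepsparsity$ hidden nodes. The incoming weights of unit $j$ are the indicator vector of $A_j$ (hence in $\{0,1\}$, as claimed), and its bias is fixed to $-(|A_j|-1)$, so the unit computes
\[
\relu\!\left(\sum_{i \in A_j} x_i - (|A_j|-1)\right).
\]
The top-layer weight out of unit $j$ is set to $w_j$, so that $N(\x)$ is the linear combination $\sum_{j=1}^\sepsparsity w_j \cdot (\text{unit } j)$.

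It then remains only to verify the gadget on the two possible cases for each $j$. If $x_i = 1$ for every $i \in A_j$, the pre-activation equals $|A_j|-(|A_j|-1)=1$, so the unit outputs $1 = \myAnd_{i \in A_j} x_i$; if some $x_i = 0$ with $i \in A_j$, then $\sum_{i \in A_j} x_i \le |A_j|-1$, the pre-activation is non-positive, and the rectifier clips it to $0 = \myAnd_{i \in A_j} x_i$. Summing over $j$ with the chosen top-layer weights then yields $N(\x) = g(\x)$ for every $\x \in \{0,1\}^\inputdim$.

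There is essentially no obstacle here: unlike the randomized constructions earlier in the paper, correctness is exact and deterministic, and the only thing to check is the elementary two-case behaviour of the pre-activation. The real content of the statement is the parameter count --- one hidden unit per monomial, with no dependence on the degree $\polynomialdegree$ or on the ambient dimension $\inputdim$ --- which is immediate from the construction and is exactly what improves on the $\inputdim \cdot \polynomialsparsity \cdot \polynomialdegree^{O(\polynomialdegree)}$ size coming from Barron's theorem.
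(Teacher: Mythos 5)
Your construction is exactly the paper's: one hidden unit per monomial computing $\relu\bigl(\sum_{i \in A_j} x_i - |A_j| + 1\bigr)$ with indicator incoming weights and top-layer weights $w_j$. The proof is correct and matches the paper's argument, with the two-case verification of the ReLU-as-AND gadget spelled out slightly more explicitly.
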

\begin{proof}
The $j$th hidden unit implements $h_j = \prod_{i \in A_j} x_i $. As before,
for boolean inputs, one can compute $h_{j}$ as $\relu(\sum_{i \in A_j} x_i -
|A_j| + 1)$. The output node computes $\sum_j w_j h_j$ where $h_j$ is the
output of $j$th hidden unit.
\end{proof}

\section{Experiments with synthetic data}
In this section, we evaluate sketches on synthetically generated datasets for
the task of polynomial regression. In all the experiments here, we assume
input dimension $d=10^4$, input sparsity $k=50$, hypothesis support $s=300$,
and $n=2\times10^5$ examples. We assume that only a subset of features
$\mathcal{I} \subseteq [d]$ are relevant for the regression task, with
$|\mathcal{I}|=50$. To generate an hypothesis, we select $s$ subsets of
relevant features $A_1,\ldots,A_{\sepsparsity}\subset \mathcal{I}$ each of
cardinality at most 3, and generate the corresponding weight vector $\w$ by
drawing corresponding $s$ non-zero entries from the standard Gaussian
distribution. We generate binary feature vectors $\x \in B_{d,k}$ as a mixture
of relevant and other features. Concretely, for each example we draw $12$
feature indices uniformly at random from $\mathcal{I}$, and the remaining
indices from $[d]$. We generate target outputs as $g(\x) + z$, where $g(\x)$
is in the form of the polynomial given in Theorem~\ref{thm:conjboolnet}, and
$z$ is additive Gaussian noise with standard deviation $0.05$. In all
experiments, we train on 90\% of the examples and evaluate the average
squared error on the rest.

We first examine the effect of the sketching parameters $m$ and $t$ on the
regression error. We generated sparse linear regression data using the above
described settings, with all feature subsets in $\mathcal{A}$ having
cardinality 1. We sketched the inputs using several values of $m$ (hash size)
and $t$ (number of hash functions). We then trained neural networks in
$\mathcal{N}_s(Relu)$ on the regression task. The results are shown in
Figure~\ref{fig:linreg_tm}. As expected, increasing the number of hash
functions $t$ leads to better performance. Using hash functions of size $m$
less than the input sparsity $k$ leads to poor results, while increasing hash
size beyond $m=ek$ (in this case, $m=ek \approxeq 136$) for reasonable $t$
yields only modest improvements.
\begin{figure}[t]
\centerline{
\includegraphics[trim = 0cm 0.9cm 0cm 1cm, clip,
	width=0.7\linewidth]{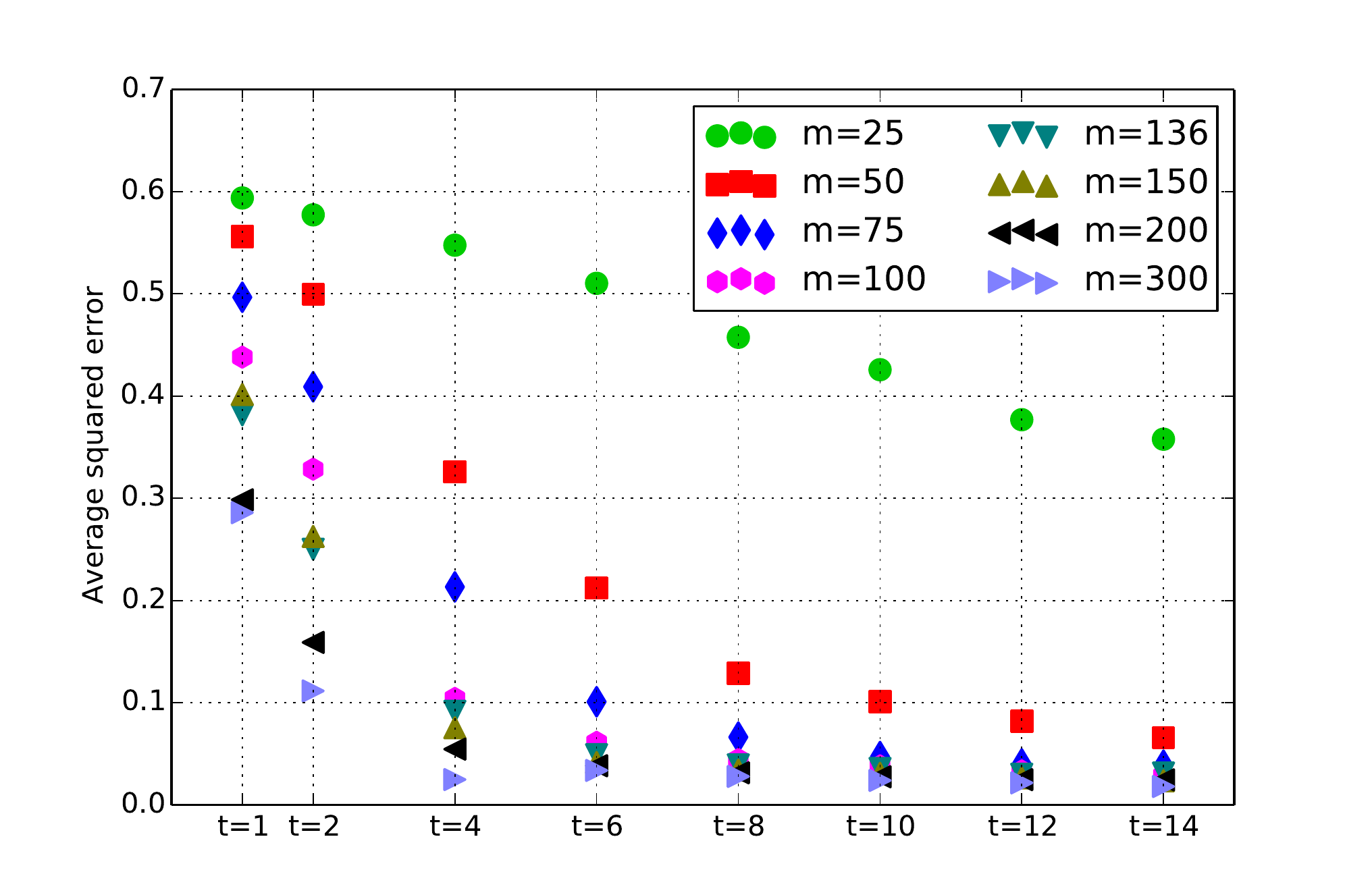}}
\vspace{-0.2cm}
\caption{The effects of varying $t$ and $m$ for
a single layer neural-network trained on sparse
linear regression data with sketched inputs.
}
\label{fig:linreg_tm}
\end{figure}

We next examine the effect of the decoding layer on regression performance. We
generated 10 sparse linear regression datasets and 10 sparse polynomial
regression datasets. The feature subsets in $\mathcal{A}$ had cardinality of 2
and 3. We then trained linear models and one-layer neural networks in
$\mathcal{N}_s(Relu)$ on original features and sketched features with $m=200$
for several values for $t$. The results are shown in
Figure~\ref{fig:decoding}. In the case of linear data, the neural network
yields notably better performance than a linear model. This suggests that
linear classifiers are not well-preserved after projections, as the
$\Omega(1/\gamma^2)$ projection size required for linear separability can be
large. Applying a neural network to sketched data allows us to use smaller
projections.

In the case of polynomial regression, neural networks applied to sketches
succeed in learning a small model and achieve significantly lower error than a
network applied to the original features for $t \geq 6$. This suggests that
reducing the input size, and consequently the number of model parameters, can
lead to better generalization. The linear model is a bad fit, showing that
our function $g(\x)$ is not well-approximated by a linear function. Previous
work on hashing and projections would imply using significantly larger
sketches for this setting.
\begin{figure}[t]
\centerline{
\includegraphics[trim = 0cm 0.7cm 0cm 0.8cm, clip,
	width=0.7\linewidth]{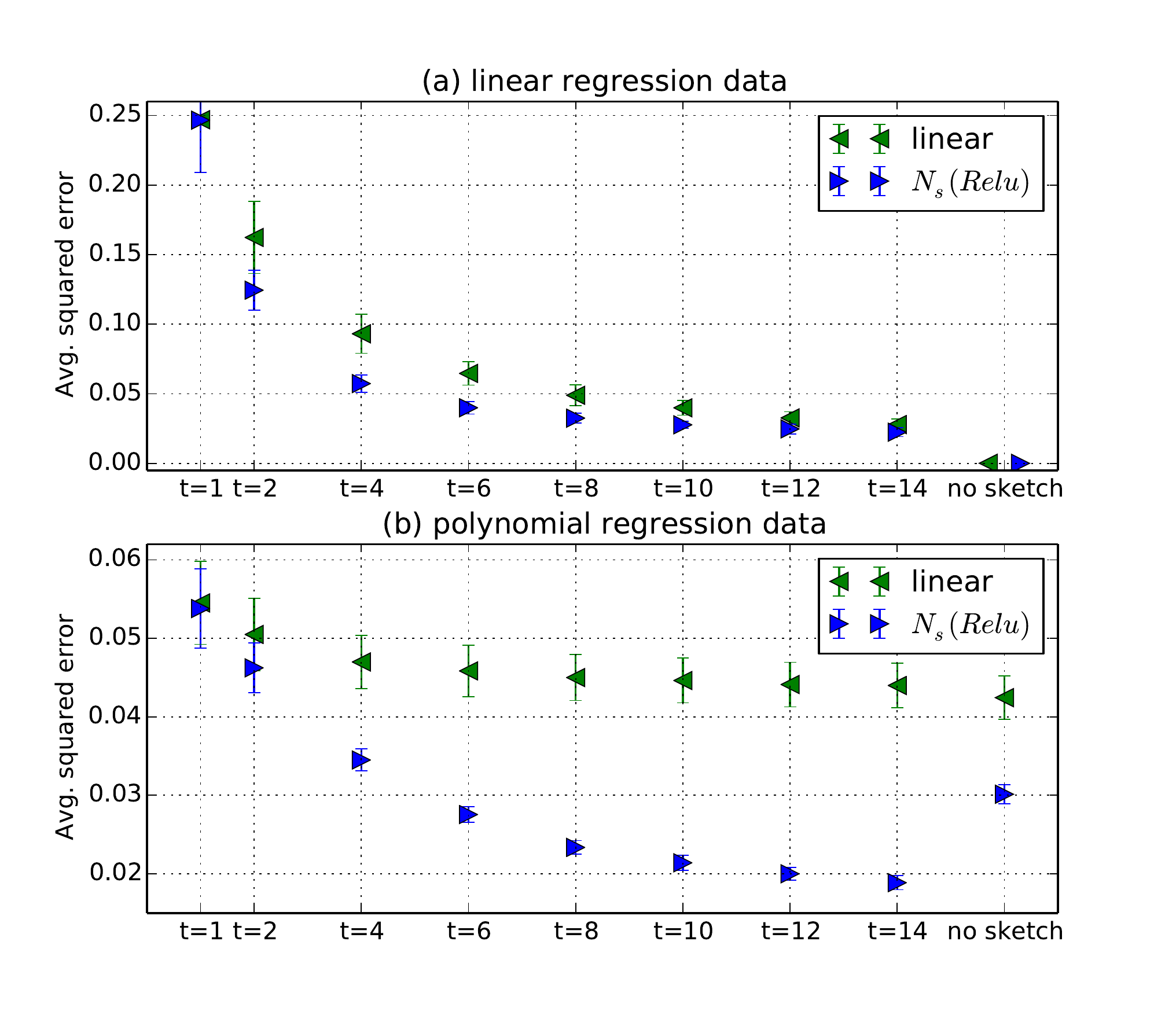}}
\vspace{-0.4cm}
\caption{Effect of the decoding layer on performance on linear and polynomial
regression data.}
\label{fig:decoding}
\end{figure}

To conclude this section, we compared sketches to Gaussian random projections.
We generated sparse linear and polynomial regression datasets with the same
settings as before, and reduce the dimensionality of the inputs to $1000$,
$2000$ and $3000$ using Gaussian random projections and sketches with $t \in
\{1, 2, 6\}$. We report the squared error averaged across examples and five
datasets of one-layer neural networks in Table~\ref{table:gaussproj}. The
results demonstrate that sketches with $t>1$ yield lower error than Gaussian
projections. Note also that Gaussian projections are dense and hence much
slower to train.
\begin{table}[th]
\begin{center}
\begin{small}
\begin{sc}
\begin{tabular}{lcccc}
\hline
\abovespace\belowspace
  & 1K & 2K & 3K\\
\hline
\abovespace
Gaussian    &  0.089   & 0.057 & 0.029   \\
Sketch $t=1$  & 0.087 & 0.049 & 0.031\\
Sketch $t=2$  & 0.072 & 0.041 & 0.023 \\
Sketch $t=6$  & 0.041 & 0.033 & 0.022 \\
\hline
Gaussian    &  0.043   & 0.037 & 0.034  \\
Sketch $t=1$  & 0.041 & 0.036 & 0.033\\
Sketch $t=2$  & 0.036 & 0.027 & 0.024 \\
Sketch $t=6$  & 0.032 & 0.022 & 0.018 \\
\hline
\end{tabular}
\end{sc}
\end{small}
\end{center}
\vskip -0.1in
\caption{Comparison of sketches and Gaussian random projections on the sparse linear regression task (top) and sparse polynomial regression task (bottom). See text for details.}
\label{table:gaussproj}
\end{table}

\begin{figure}[t]
\centerline{\includegraphics[width=0.7\linewidth]{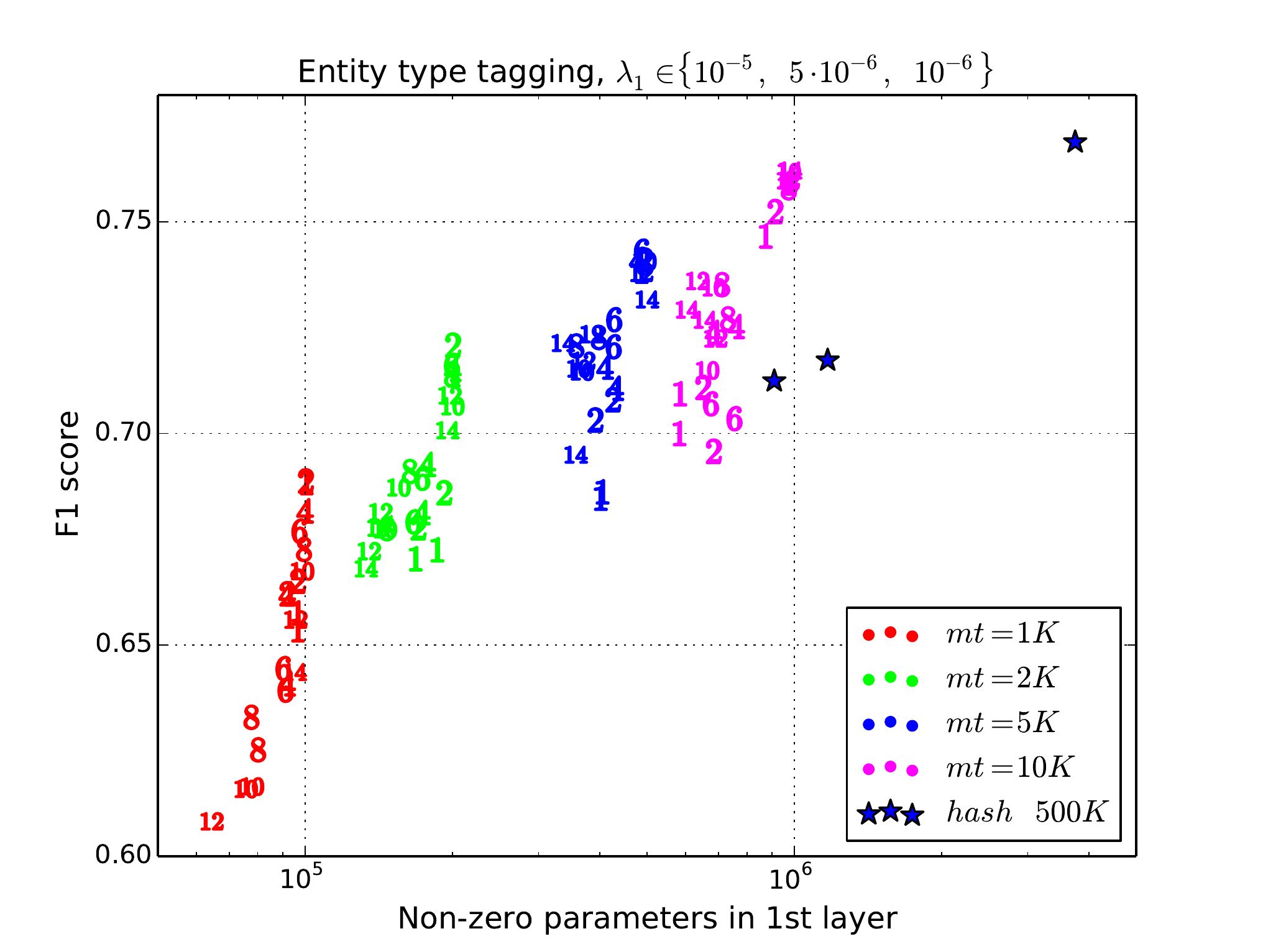}}
\vspace{-0.3cm}
\caption{F1 score vs. number of non-zero parameters in the first layer for
entity type tagging. Each color corresponds to a different sketch size and
markers indicate the number of subsketches $t$.}
\label{fig:sketch_finetype}
\end{figure}
\begin{figure}[th]
\centerline{\includegraphics[width=0.7\linewidth]{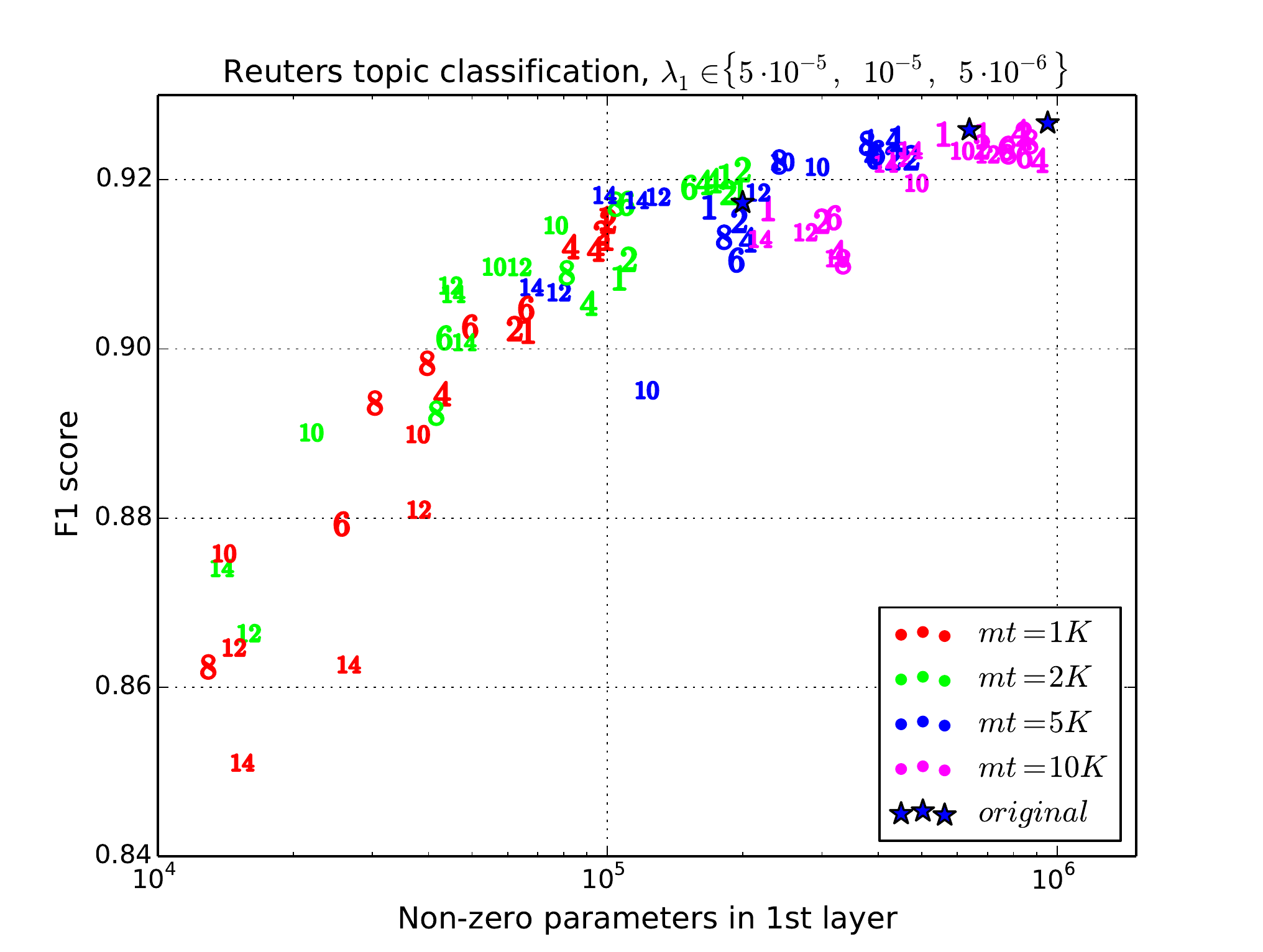}}
\vspace{-0.3cm}
\caption{F1 score vs. number of non-zero parameters in the first layer for Reuters topic classification. Each color corresponds to a different sketch size $tm$, and markers indicate the number of subsketches $t$.}
\label{fig:sketch_reuters}
\end{figure}
\begin{figure}[th]
\centerline{\includegraphics[width=0.7\linewidth]{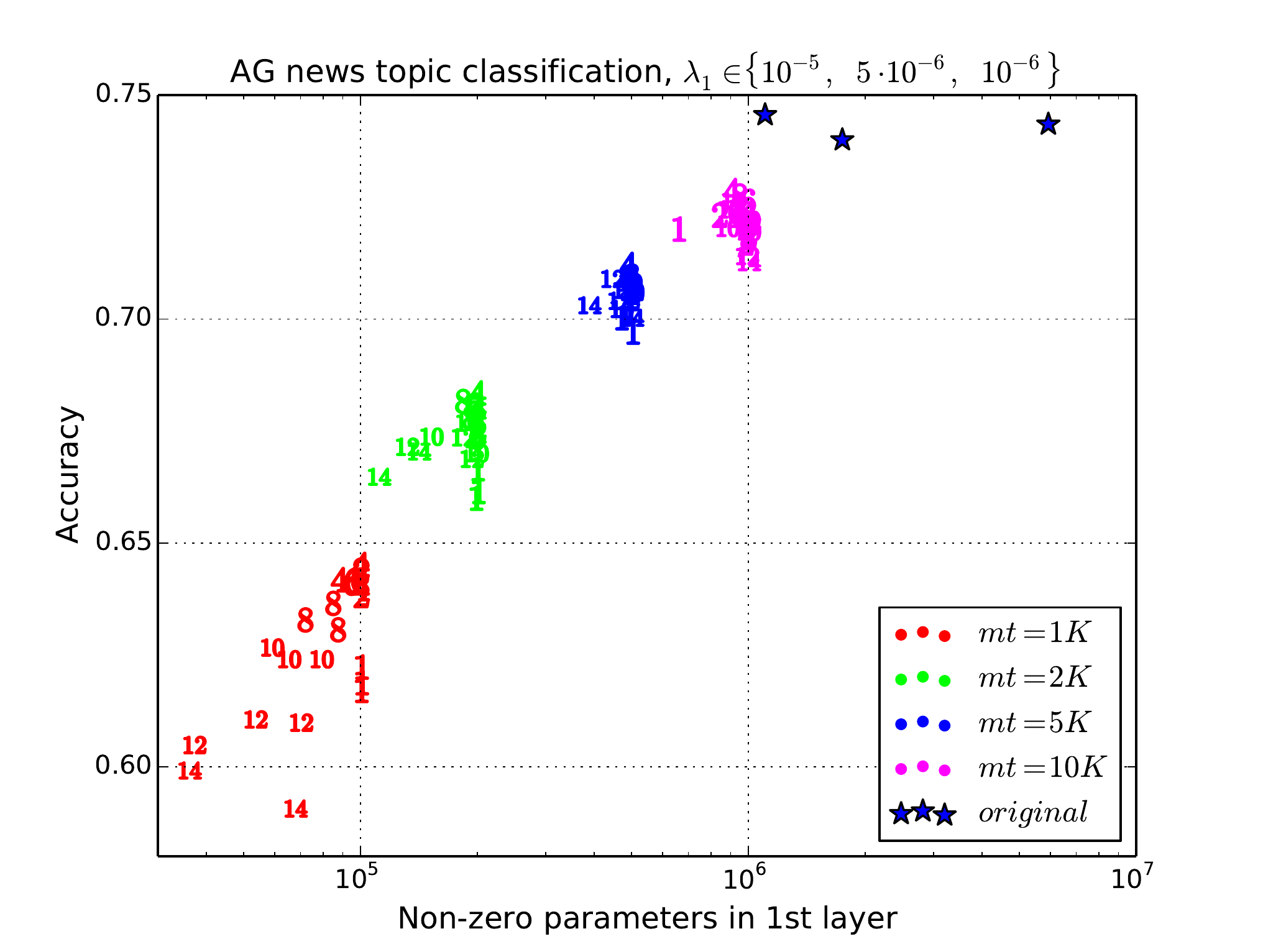}}
\vspace{-0.3cm}
\caption{AG news topic classification accuracy vs. number of non-zero parameters in the first layer. Each color corresponds to a different sketch size $tm$, and markers indicate the number of subsketches $t$.}
\label{fig:sketch_agnews}
\end{figure}
\section{Experiments with language processing tasks}
Linear and low degree sparse polynomials are often used for classification.
Our results imply that if we have linear or a sparse polynomial with
classification accuracy $1\!-\!\eps$ over some set of examples in
$B_{\inputdim,\sparsity} \times \{0,1\}$, then neural networks constructed to
compute the linear or polynomial function attain accuracy of at least
$1\!-\!\eps\!-\!\delta$ over the same examples. Moreover, the number of
parameters in the new network is relatively small by enforcing sparsity or
$\ell_{1}$ bounds for the weights into the hidden layers.  We thus get
generalization bounds with negligible degradation with respect to non-sketched
predictor. In this section, we evaluate sketches on the language processing
classification tasks described below.

\paragraph{Entity type tagging.}
Entity type tagging is the task of assigning one or more labels (such as
\emph{person}, \emph{location}, \emph{organization}, \emph{event}) to mentions
of entities in text. We perform type tagging on a corpus of new documents
containing 110K mentions annotated with 88 labels (on average, 1.7 labels per
mention). Features for each mention include surrounding words, syntactic and
lexical patterns, leading to a very large dictionary. Similarly to previous
work, we map each string feature to a 32 bit integer, and then further reduce
dimensionality using hashing or sketches. See~\cite{Gillick2014} for more
details on features and labels for this task.

\paragraph{Reuters-news topic classification.}
The Reuters RCV1 data set consists of a collection of approximately 800,000
text articles, each of which is assigned multiple labels. There are 4
high-level categories: Economics, Commerce, Medical, and Government (ECAT,
CCAT, MCAT, GCAT), and multiple more specific categories. We focus on training
binary classifiers for each of the four major categories. The input features
we use are binary unigram features. Post word-stemming, we get
data of approximately 113,000 dimensions. The feature vectors are very sparse,
however, and most examples have fewer than 120 non-zero features.

\paragraph{AG news topic classification.}
We perform topic classification on $680K$ articles from
\href{http://www.di.unipi.it/~gulli/AG_corpus_of_news_articles.html}{AG news
corpus}, labeled with one of 8 news categories: \emph{Business},
\emph{Entertainment}, \emph{Health}, \emph{Sci/Tech}, \emph{Sports},
\emph{Europe}, \emph{U.S.}, \emph{World}.  For each document, we extract
binary word indicator features from the title and description; in total, there
are ~210K unique features, and on average, 23 non-zero features per document.

\paragraph{Experimental setup.}
In all experiments, we use two-layer feed-forward networks with ReLU
activations and 100 hidden units in each layer. We use a softmax output for
multiclass classification and multiple binary logistic outputs for multilabel
tasks. We experimented with input sizes of 1000, 2000, 5000, and 10,000 and
reduced the dimensionality of the original features using sketches with $t \in
\{1, 2, 4, 6, 8, 10, 12, 14\}$ blocks. In addition, we experimented with
networks trained on the original features. We encouraged parameter sparsity in
the first layer using $\ell_1$-norm regularization and learn parameters using
the proximal stochastic gradient method. As before, we trained on 90\% of the
examples and evaluated on the remaining 10\%. We report accuracy values for
multiclass classification, and F1 score for multilabel tasks, with true
positive, false positive, and false negative counts accumulated across all
labels.

\paragraph{Results.}
Since one motivation for our work is reducing the number of parameters in
neural network models, we plot the performance metrics versus the number of
non-zero parameters in the first layer of the network. The results are shown
in Figures \ref{fig:sketch_finetype}, \ref{fig:sketch_reuters}, and
\ref{fig:sketch_agnews} for different sketching configurations and settings of
the $\ell_1$-norm regularization parameters ($\lambda_1$). On the entity type
tagging task, we compared sketches to a single hash function of size 500,000
as the number of the original features is too large. In this case, sketching
allows us to both improve performance and reduce the number of parameters. On
the Reuters task, sketches achieve similar performance to the original
features with fewer parameters. On AG news, sketching results in more compact
models at a modest drop in accuracy. In almost all cases, multiple hash
functions yield higher accuracy than a single hash function for similar model
size.

\section{Conclusions}

We have presented a simple sketching algorithm for sparse boolean inputs,
which succeeds in significantly reducing the dimensionality of inputs. A
single-layer neural network on the sketch can provably model any sparse linear or
polynomial function of the original input. For $k$-sparse vectors in
$\{0,1\}^d$, our sketch of size $O(k\log s/\delta)$ allows computing any
$s$-sparse linear or polynomial function on a $1\!-\!\delta$ fraction of the
inputs. The hidden constants are small, and our sketch is sparsity
preserving. Previous work required sketches of size at least $\Omega(s)$ in
the linear case and size at least $k^p$
for preserving degree-$p$ polynomials.

Our results can be viewed as showing a compressed sensing scheme for $0$-$1$
vectors, where the decoding algorithm is a depth-1 neural network. Our
scheme requires $O(\sparsity \log \inputdim)$ measurements, and we leave
open the question of whether this can be improved to $O(\sparsity \log
\frac{\inputdim}{\sparsity})$ in a stable way.

We demonstrated empirically that our sketches work well for both linear and
polynomial regression, and that using a neural network does improve over a
direct linear regression. We show that on real datasets, our methods lead to
smaller models with similar or better accuracy for multiclass and multilabel
classification problems. In addition, the compact sketches lead to fewer
trainable parameters and faster training.

\section*{Acknowledgements} We would like to thank Amir Globerson for
numerous fruitful discussion and help with an early version of the
manuscript.

\bibliography{deepsketch}
\bibliographystyle{alpha}

\appendix

\section{General Sketches} \label{app:generalsketches}
The results of Section~\ref{sec:sketching} extend naturally to positive and
to general real-valued $\x$.
\begin{thm}
\label{thm:rsk}
Let $\x \in \Re^{+}_{\inputdim,\sparsity,\cee}$ and let $h_{1},\ldots,h_{t}$
be drawn uniformly and independently from a pairwise independent distribution, for
$\rangesize=e(\sparsity+\frac{1}{\eps})$. Then for any $i$,
\begin{align*}
\Pr\left[DecMin(Sk_{h_{1},\ldots,h_{t}}(\x), i) \not\in [x_{i}, x_{i} + \eps \cee] \right]\leq \exp(-t).
\end{align*}
\end{thm}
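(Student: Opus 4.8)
The plan is to reduce, as in Theorem~\ref{thm:bsk}, to a single-hash statement and then exploit independence across the $t$ hashes. The crucial new observation is that since $\x \in \Re^{+}_{\inputdim,\sparsity,\cee}$ is coordinate-wise nonnegative, for any single hash $h$ we have
\begin{align*}
Dec(Sk_h(\x), i; h) = x_i + \sum_{i' \in \colideset{i}} x_{i'} \geq x_i\,,
\end{align*}
so the lower endpoint of the target interval is never violated and $DecMin \geq x_i$ deterministically. Hence the only bad event is the collision mass $\sum_{i' \in \colideset{i}} x_{i'}$ exceeding $\eps \cee$, and it suffices to show that a single hash produces such an overflow with probability at most $1/e$.

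To bound this overflow I would split the coordinates into the head $H = \supp(head_\sparsity(\x))$ (the at most $\sparsity$ largest entries) and the tail $T = [\inputdim]\setminus H$, and correspondingly split the collision mass into $\sum_{i' \in \colideset{i}\cap H} x_{i'}$ and $\sum_{i' \in \colideset{i}\cap T} x_{i'}$. For the head part, let $E_1$ be the event that some head index other than $i$ hashes to $h(i)$; a union bound over the at most $\sparsity$ head indices together with pairwise independence gives $\Pr[E_1] \leq \sparsity/\rangesize$, and since $i \notin \colideset{i}$, off $E_1$ the head contribution is exactly zero. For the tail part, let $E_2 = \{\sum_{i' \in \colideset{i}\cap T} x_{i'} > \eps\cee\}$; this sum is nonnegative with expectation $\frac{1}{\rangesize}\sum_{i'\in T, i'\neq i} x_{i'} \leq \|tail_\sparsity(\x)\|_1/\rangesize \leq \cee/\rangesize$ by pairwise independence, so Markov's inequality yields $\Pr[E_2] \leq \cee/(\eps\cee\rangesize) = 1/(\eps\rangesize)$.

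With the chosen $\rangesize = e(\sparsity + \tfrac{1}{\eps})$, these two bounds add to
\begin{align*}
\Pr[E_1 \cup E_2] \leq \frac{\sparsity}{\rangesize} + \frac{1}{\eps\rangesize} = \frac{\sparsity + 1/\eps}{e(\sparsity + 1/\eps)} = \frac 1 e\,,
\end{align*}
and off $E_1 \cup E_2$ the total collision mass is at most $0 + \eps\cee$, so $Dec(Sk_h(\x),i;h) \in [x_i, x_i + \eps\cee]$. Thus a single hash fails with probability at most $1/e$. Finally, because the $t$ hashes are drawn independently and $DecMin$ is their minimum, and since every decoding lies at or above $x_i$, the value $DecMin(Sk_{h_{1:t}}(\x),i)$ leaves $[x_i, x_i+\eps\cee]$ only if every one of the $t$ decodings overflows; by independence this has probability at most $(1/e)^t = \exp(-t)$.

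I expect the main obstacle to be the head/tail separation: a naive Markov bound on the entire collision mass $\sum_{i'\in\colideset{i}} x_{i'}$ fails because the head entries can be arbitrarily large, leaving that expectation uncontrolled. Isolating the boundedly many head indices via a collision-free event, and applying Markov only to the tail mass — whose $\ell_1$ norm is controlled by $\cee$ — is precisely what lets the budget $\rangesize = e(\sparsity + \tfrac{1}{\eps})$ split cleanly between the two sources of error.
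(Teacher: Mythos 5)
Your proof is correct and follows essentially the same route as the paper's: the same head/tail decomposition of the collision mass, a first-moment (union/Markov) bound of $\sparsity/\rangesize$ on the head colliding at all, Markov's inequality giving $1/(\eps\rangesize)$ for the tail overflow, and independence across the $t$ hashes to get $e^{-t}$. The observation that nonnegativity makes the lower endpoint automatic, and that the naive Markov bound on the full collision mass fails, matches the paper's reasoning exactly.
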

\begin{proof}
Fix a vector $\x \in \Re^{+}_{\inputdim,\sparsity,\cee}$.
To remind the reader, for a specific $i$ and $h$, $Dec(\y, i ; h) \eqdef y_{h(i)}$, and we defined the set of collision indices $\colideset{i} \eqdef \{i'\neq i: h(i') = h(i)\}$.
Therefore, $y_{h(i)}$ is equal to
$x_i + \sum_{i'\in\colideset{i}} x_{i'} \geq x_i$. We next show that
\begin{align} \label{eq:oneh}
	\Pr\left[Dec(Sk_{h}(\x), i ; h) > x_{i}+\eps \cee\right]
		\leq 1/e\,.
\end{align}
We can rewrite
\begin{align*}
Dec(Sk_{h}(\x), i ; h) - x_{i} \; = \;
\sum_{i' \in \supp(head_{\sparsity}(\x)) \cap \colideset{i}}
	\hspace{-26pt} x_{i'} \quad +
	\sum_{i' \in \supp(tail_{\sparsity}(\x)) \cap \colideset{i}}
	\hspace{-26pt} x_{i'} \quad .
\end{align*}
By definition of $tail_k(\cdot)$, the expectation of the second term is
$\cee/\rangesize$. Using Markov's inequality, we get that
\begin{align*}
\Pr[\sum_{i' \in \supp(tail_{\sparsity}(\x)) \cap \colideset{i}}
		\hspace{-26pt} x_{i'} \quad > \eps \cee] \leq \frac{1}{\eps \rangesize}\,.
\end{align*}
To bound the first term, note that
\begin{align*}
	\Pr\left[\sum_{i' \in \supp(head_{\sparsity}(\x)) \cap \colideset{i}}
		\hspace{-26pt} x_{i'} \neq 0 \quad\right]
	\leq
	\Pr\left[\sum_{i' \in \supp(head_{\sparsity}(\x)) \cap \colideset{i}}
		\hspace{-26pt} 1 \neq 0 \quad\right]
	\leq
		\Ex\left[\sum_{i' \in \supp(head_{\sparsity}(\x)) \cap \colideset{i}}
			\hspace{-26pt} 1 \;\quad\right]
  \leq \frac k \rangesize\,.
\end{align*}
Recall that $\rangesize = e(\sparsity+1/\eps)$, then, using the union
bound establishes (\ref{eq:oneh}). The rest of the proof is identical
to Theorem~\ref{thm:bsk}.
\end{proof}
\begin{cor}
\label{cor:poscor-app}
Let $\w \in \Hyp_{\inputdim,\sepsparsity}$ and $\x \in
\Re^{+}_{\inputdim,\sparsity,\cee}\,$. For $t = \log (\sepsparsity/\delta)$, and
$\rangesize=e(\sparsity +\frac 1 \eps)$, if $h_{1},\ldots,h_{t}$ are drawn
uniformly and independently from a pairwise independent distribution, then
\begin{align*}
\Pr\left[|\sum_{i} w_{i} \cdot DecMin(Sk_{h_{1:t}}(\x),i) -
	\w^{\top}\x| \geq \eps c \|\w\|_{1}\right] \leq \delta\,.
\end{align*}
\end{cor}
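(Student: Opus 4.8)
The plan is to lift the single-coordinate guarantee of Theorem~\ref{thm:rsk} to the weighted sum, using a union bound over the support of $\w$ followed by a triangle inequality. The real work has already been done in Theorem~\ref{thm:rsk}; what remains is a routine combination, so I will keep the argument short.

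First I would apply Theorem~\ref{thm:rsk} to each index $i \in \supp(\w)$. With $\rangesize = e(\sparsity + 1/\eps)$, that theorem guarantees $DecMin(Sk_{h_{1:t}}(\x),i) \in [x_i,\, x_i + \eps \cee]$ except with probability at most $e^{-t}$. Since $\w \in \Hyp_{\inputdim,\sepsparsity}$ has $|\supp(\w)| \le \sepsparsity$, a union bound over these at most $\sepsparsity$ indices shows that, except with probability at most $\sepsparsity\, e^{-t}$, the event
\begin{align*}
\mathcal{G} = \{\, DecMin(Sk_{h_{1:t}}(\x),i) \in [x_i,\, x_i + \eps \cee] \ \text{for all } i \in \supp(\w) \,\}
\end{align*}
holds. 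With $t = \log(\sepsparsity/\delta)$ we have $e^{-t} = \delta/\sepsparsity$, hence $\Pr[\mathcal{G}] \ge 1-\delta$.

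Next, on the event $\mathcal{G}$, I would write $\hat x_i = DecMin(Sk_{h_{1:t}}(\x),i)$ and bound the weighted decoding error directly. Because $w_i = 0$ for $i \notin \supp(\w)$ and $0 \le \hat x_i - x_i \le \eps \cee$ for every $i \in \supp(\w)$, the triangle inequality gives
\begin{align*}
\left| \sum_i w_i \hat x_i - \w^\top \x \right| = \left| \sum_{i \in \supp(\w)} w_i (\hat x_i - x_i) \right| \le \sum_{i \in \supp(\w)} |w_i|\, |\hat x_i - x_i| \le \eps \cee \sum_i |w_i| = \eps \cee \|\w\|_1 .
\end{align*}
Combining this with $\Pr[\mathcal{G}] \ge 1-\delta$ yields the claim.

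The only point requiring genuine care is that the per-coordinate error is \emph{one-sided}: since $\x$ is nonnegative, every decoding $y_{h(i)} = x_i + \sum_{i' \in \colideset{i}} x_{i'}$ overestimates $x_i$, so $\hat x_i \ge x_i$, and the overshoot is at most $\eps \cee$ \emph{uniformly} in $i$. This uniform additive bound — together with the fact that the union bound ranges only over the $\sepsparsity$ coordinates of $\supp(\w)$, which is exactly what the choice $t = \log(\sepsparsity/\delta)$ is calibrated against — is what converts the coordinatewise guarantee of Theorem~\ref{thm:rsk} into the $\eps \cee \|\w\|_1$ bound on the weighted sum. There is no substantive obstacle beyond invoking the right theorem with the right parameters.
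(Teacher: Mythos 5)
Your proof is correct and follows the route the paper intends: apply the per-coordinate guarantee of Theorem~\ref{thm:rsk}, union-bound over the at most $\sepsparsity$ coordinates of $\supp(\w)$ (which is exactly what $t=\log(\sepsparsity/\delta)$ is tuned for), and finish with the triangle inequality to get the $\eps\cee\|\w\|_1$ bound. The paper states the corollary without proof precisely because this is the expected derivation, and your observation that the per-coordinate error is one-sided and uniformly bounded is the right thing to note.
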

The proof in the case that $\x$ may not be non-negative is only slightly
more complicated. Let us define the following additional decoding procedure,
\begin{align*}
DecMed\left(Y , i ; h_{1:t}\right)
	&\eqdef \operatorname*{Median}_{j \in [t]} Dec(\y_{j}, i ; h_{j}) \,.
\end{align*}
\begin{thm}
Let $\x \in \Re_{\inputdim,\sparsity,\cee}\,$, and let $h_{1},\ldots,h_{t}$ be
drawn uniformly and independently from a pairwise independent distribution, for
$\rangesize=4e^2(\sparsity+2/\eps)$. Then for any $i$,
\begin{align*}
\Pr\left[DecMed(Sk_{h_{1:t}}(\x),i) \not\in [x_{i}-\eps \cee, x_{i} +
	\eps \cee]\right] \leq e^{-t}\,.
\end{align*}
\end{thm}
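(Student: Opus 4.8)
The plan is to follow the two-stage template of Theorem~\ref{thm:rsk}: first bound the failure probability of a single hash, then amplify using the median over the $t$ independent hashes. The crucial difference from the non-negative case is that when $\x$ may have entries of either sign, the per-hash error $\sum_{i' \in \colideset{i}} x_{i'}$ is no longer one-sided, so a minimum could undershoot below $x_i$ as well as overshoot above it; this is exactly why we decode with $DecMed$ rather than $DecMin$.

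First, I would fix a single hash $h$ and write $Dec(Sk_h(\x), i; h) - x_i = \sum_{i' \in \colideset{i}} x_{i'}$, splitting the colliding indices into those in $\supp(head_\sparsity(\x))$ and those in $\supp(tail_\sparsity(\x))$. For the head part, by pairwise independence and a union bound the probability that any of the at most $\sparsity$ heavy coordinates collides with $i$ is at most $\sparsity/\rangesize$, so the head contribution vanishes except with this probability. For the tail part, I would bound its absolute value by $\sum_{i' \in \supp(tail_\sparsity(\x)) \cap \colideset{i}} |x_{i'}|$, whose expectation is $\|tail_\sparsity(\x)\|_1/\rangesize \le \cee/\rangesize$; Markov's inequality then gives that it exceeds $\eps\cee$ with probability at most $1/(\eps\rangesize)$. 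A union bound shows the single-hash decoding lies outside $[x_i - \eps\cee, x_i + \eps\cee]$ with probability at most $(\sparsity + 1/\eps)/\rangesize$, and substituting $\rangesize = 4e^2(\sparsity + 2/\eps)$ makes this strictly smaller than $1/(4e^2)$.

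Second, I would amplify by the median. The value $DecMed(Sk_{h_{1:t}}(\x),i)$ can exceed $x_i + \eps\cee$ only if at least half of the $t$ individual decodings exceed $x_i + \eps\cee$, and symmetrically for falling below $x_i - \eps\cee$; in either case at least $t/2$ of the independent single-hash events fail. Writing $p < 1/(4e^2)$ for the per-hash failure probability, a standard binomial tail bound gives $\Pr[\text{at least } t/2 \text{ of the } t \text{ fail}] \le \sum_{j \ge t/2}\binom{t}{j}p^{j} \le 2^t p^{t/2} = (4p)^{t/2} < (e^{-2})^{t/2} = e^{-t}$, which is precisely the claimed bound.

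The main obstacle is the first stage: driving the per-hash failure probability strictly below the threshold $1/(4e^2)$, since this is exactly the level needed for the factor $(4p)^{t/2}$ to beat $e^{-t}$ in the median amplification. This is what forces the enlarged range $\rangesize = 4e^2(\sparsity + 2/\eps)$, a constant-factor blow-up over the $e(\sparsity + 1/\eps)$ of the non-negative case, where the one-sided $DecMin$ required \emph{all} $t$ hashes to fail and so tolerated a per-hash failure probability as large as $1/e$. Once this constant is pinned down, the median step is routine.
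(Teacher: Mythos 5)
Your proof is correct and follows essentially the same route as the paper: a per-hash head/tail decomposition with a collision bound for the head and Markov's inequality for the tail, followed by median amplification over the $t$ independent hashes via a binomial tail bound. The only (immaterial) differences are that you bound the signed tail sum by its absolute value with a single application of Markov where the paper applies Markov to each side separately, and you use the elementary bound $2^t p^{t/2} = (4p)^{t/2} \le e^{-t}$ in place of the paper's Chernoff/KL-divergence form.
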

\begin{proof}
As before, fix a vector $\x \in \Re_{\inputdim,\sparsity,\cee}$ and a specific
$i$ and $h$. We once again write
\begin{align*}
 Dec(Sk_{h}(\x), i ; h) - x_{i} \; = \;
	 \sum_{i' \in \supp(head_{\sparsity}(\x)) \cap \colideset{i}}
			\hspace{-26pt} x_{i'}  \;\quad +
	\sum_{i' \in \supp(tail_{\sparsity}(\x)) \cap \colideset{i}}
			\hspace{-26pt} x_{i'}  \;\quad .
\end{align*}

By the same argument as the proof of Theorem~\ref{thm:rsk}, the first term is
nonzero with probability $k/\rangesize$. The second term has expectation
in $[-\cee/\sepsparsity, \cee/\sepsparsity]$. Once again by Markov's
inequality,
\begin{align*}
\Pr\left[
	\sum_{i' \in \supp(tail_{\sparsity}(\x)) \cap \colideset{i}}
	\hspace{-26pt} x_{i'} > \eps \cee\right] \leq \frac{1}{\eps \rangesize}
	\;\; \mbox{ and } \;\;
\Pr\left[ \sum_{i'  \in \supp(tail_{\sparsity}(\x)) \cap \colideset{i}}
	\hspace{-26pt} x_{i'} < -\eps \cee\right] \leq \frac{1}{\eps \rangesize}\,.
\end{align*}
Recalling that $\rangesize = 4e^2(\sparsity+2/\eps)$, a union bound
establishes that for any $j$,
\begin{align*}
	\Pr[|Dec(Sk_{h_j}(\x),h_j, i) - x_{i}| > \eps \cee] \leq \frac{1}{4e^2}.
\end{align*}
Let $X_j$ be indicator for the event that
$|Dec(Sk_{h_j}(\x),h_j, i) - x_{i}| >\eps \cee$. Thus $X_1,\ldots, X_t$ are
binomial random variables with
$Pr[X_j=1] \leq \frac 1 {4e^2}$. Then by Chernoff's bounds
\begin{align*}
&\Pr\left[DecMed(Sk_{h_{1:t}}(\x),i)
	\not\in [x_{i}-\eps \cee, x_{i} + \eps \cee]\right] \\
		&\leq Pr\left[\sum_j X_j > t/2\right]\\
&\leq  \exp\left(-\left(\frac{1}{2}\ln \frac {1/2}{{1}/{4e^2}}
	+ \frac{1}{2} \ln \frac{{1}/{2}}{1-{1}/{4e^2}}\right)t\right)\\
&\leq  \exp\left(-\left(\frac{1}{2}\ln 2e^2
	+ \frac{1}{2} \ln \frac {1}{2}\right)t\right)\\
&\leq \exp(-t)\,.
\end{align*}
\end{proof}
\begin{cor}
\label{cor:gencor-app}
Let $\w \in \Hyp_{\inputdim,\sepsparsity}$ and $\x \in
\Re_{\inputdim,\sparsity,\cee},$.  For $t = \log (\sepsparsity/\delta)$, and
$\rangesize=4e^2(\sparsity+\frac 1 \eps)$, if $h_{1},\ldots,h_{t}$ are drawn
uniformly and independently from a pairwise independent distribution, then
\begin{align*}
\Pr\left[\left|\sum_{i} w_{i} \, DecMed(Sk_{h_{1:t}}(\x),i) -
	\w^{\top}\x\right| \geq \eps \cee \|\w\|_{1}\right]
	\leq \delta\,.
\end{align*}
\end{cor}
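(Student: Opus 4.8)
The plan is to derive this corollary from the preceding $DecMed$ theorem exactly as Corollaries~\ref{cor:bcor} and~\ref{cor:poscor-app} were derived from their corresponding per-coordinate theorems: a union bound over the support of $\w$, followed by the triangle inequality on the complementary (good) event. The key structural observation is that although the sum $\sum_i w_i\, DecMed(Sk_{h_{1:t}}(\x),i)$ and the target $\w^\top\x$ both range over all of $[\inputdim]$, only the coordinates $i \in \supp(\w)$ contribute, since $w_i = 0$ elsewhere. Because $\w \in \Hyp_{\inputdim,\sepsparsity}$, there are at most $\sepsparsity$ such coordinates, so I only need the median decoding to be accurate on those $\sepsparsity$ positions, not on all $\inputdim$ of them.

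First I would invoke the theorem coordinate-by-coordinate: for each $i \in \supp(\w)$ it guarantees $DecMed(Sk_{h_{1:t}}(\x),i) \in [x_i - \eps\cee,\, x_i + \eps\cee]$ except with probability at most $e^{-t}$. Taking a union bound over the at most $\sepsparsity$ coordinates of $\supp(\w)$, and using $t = \log(\sepsparsity/\delta)$ so that $e^{-t} = \delta/\sepsparsity$, the probability that \emph{any} coordinate in the support falls outside its $\eps\cee$-window is at most $\sepsparsity \cdot (\delta/\sepsparsity) = \delta$. The one piece of bookkeeping to watch is matching the range size: the theorem is stated with $\rangesize = 4e^2(\sparsity + 2/\eps)$ to achieve per-coordinate error $\eps\cee$, whereas the corollary is stated with $\rangesize = 4e^2(\sparsity + 1/\eps)$, so one either reruns the theorem with $\eps$ replaced by a fixed constant multiple (absorbing the factor into the stated constant) or treats the discrepancy as a harmless typo; neither affects the form of the conclusion.

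Finally, on the good event — where every $i \in \supp(\w)$ satisfies $|DecMed(Sk_{h_{1:t}}(\x),i) - x_i| \le \eps\cee$ — the triangle inequality gives
\begin{align*}
\left|\sum_i w_i\, DecMed(Sk_{h_{1:t}}(\x),i) - \w^\top\x\right|
	\le \sum_{i \in \supp(\w)} |w_i|\,\bigl|DecMed(Sk_{h_{1:t}}(\x),i) - x_i\bigr|
	\le \eps\cee \sum_i |w_i| = \eps\cee\,\|\w\|_1 \,,
\end{align*}
which, combined with the $\delta$ failure bound from the union step, is exactly the claim. I do not anticipate a genuine obstacle here: the argument is entirely routine, and the only care required is the constant-matching in $\rangesize$ noted above, together with the (already exploited) fact that sparsity of $\w$ keeps the union bound over only $\sepsparsity$ terms rather than $\inputdim$.
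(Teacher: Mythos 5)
Your proposal is correct and matches the paper's intended derivation: the corollary is stated without explicit proof precisely because it follows from the per-coordinate $DecMed$ theorem by the same union-bound-over-$\supp(\w)$-plus-triangle-inequality argument used for Corollaries~\ref{cor:bcor} and~\ref{cor:poscor-app}. Your observation about the mismatch between $\rangesize = 4e^2(\sparsity + 2/\eps)$ in the theorem and $\rangesize = 4e^2(\sparsity + \frac{1}{\eps})$ in the corollary correctly flags a genuine (harmless) inconsistency in the paper's constants.
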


\section{Gaussian Projection} \label{app:gaussian}
In this section we describe and analyze a simple decoding algorithm for
Gaussian Projections.
\begin{thm}
Let $\x \in \Re^{\inputdim}$, and let $G$ be a random Gaussian matrix in
$\Re^{\projdim\times \inputdim}$. Then for any $i$, there exists a linear
function $f_i$ such that
\begin{align*}
\Ex_{G}\left[(f_i(G\x)-x_i)^2\right] &\leq \frac{\|\x - x_i \e_i\|_2^2}{\projdim}
\end{align*}
\end{thm}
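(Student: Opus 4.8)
The plan is to decode coordinate $i$ by correlating the sketch with the direction of the $i$-th column of $G$. Write $g_i = G\e_i \in \Re^{\projdim}$ for that column, so that $G\x = \sum_j x_j\, g_j$. A first instinct is to take $f_i(y) = \langle g_i, y\rangle/\projdim$, but expanding shows this leaves a fluctuation $x_i(\|g_i\|_2^2 - \projdim)/\projdim$ coming from the diagonal term, which contributes an unwanted $x_i^2$ to the error. To eliminate it I would instead normalize by the observed squared norm and set
\[
 f_i(y) = \frac{\langle g_i, y\rangle}{\|g_i\|_2^2}.
\]
Because $\langle g_i, g_i\rangle = \|g_i\|_2^2$, this makes the $i$-th term exact for every realization of $g_i$, so that $f_i(G\x) - x_i = \sum_{j\ne i} x_j\, \langle g_i, g_j\rangle/\|g_i\|_2^2$; the error is a weighted sum of the off-diagonal column correlations only.

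First I would condition on $g_i$. For $j\ne i$ the columns $g_j$ are independent of $g_i$ and of one another, and each $\langle g_i, g_j\rangle$ is conditionally mean zero, so the cross terms in the expansion of the square vanish in expectation and
\[
 \Ex\!\left[(f_i(G\x)-x_i)^2 \mid g_i\right] = \sum_{j\ne i} x_j^2\, \frac{\Ex[\langle g_i, g_j\rangle^2 \mid g_i]}{\|g_i\|_2^4}.
\]
Conditioned on $g_i$ the scalar $\langle g_i, g_j\rangle$ is Gaussian with mean zero and variance $\|g_i\|_2^2$, whence $\Ex[\langle g_i, g_j\rangle^2 \mid g_i] = \|g_i\|_2^2$. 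Substituting collapses the conditional error to
\[
 \Ex\!\left[(f_i(G\x)-x_i)^2 \mid g_i\right] = \frac{\sum_{j\ne i} x_j^2}{\|g_i\|_2^2} = \frac{\|\x - x_i\e_i\|_2^2}{\|g_i\|_2^2},
\]
and taking the outer expectation reduces the whole statement to bounding $\Ex\!\left[1/\|g_i\|_2^2\right]$.

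The delicate point — and the only place where the stated constant matters — is this last expectation. If the columns of $G$ are normalized to unit length, then $\|g_i\|_2^2 = 1$ and $\Ex[\langle g_i, g_j\rangle^2] = 1/\projdim$ exactly for the resulting independent uniform directions, giving precisely $\|\x - x_i\e_i\|_2^2/\projdim$. For genuinely i.i.d.\ $N(0,1)$ entries one instead has $\|g_i\|_2^2 \sim \chi^2_{\projdim}$ and $\Ex[1/\|g_i\|_2^2] = 1/(\projdim-2)$, the same bound up to the harmless replacement of $\projdim$ by $\projdim - 2$; I would therefore either adopt the unit-norm normalization or note that this $O(1/\projdim^2)$ slack is immaterial. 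Everything else is an elementary second-moment computation, so I expect no genuine difficulty beyond fixing this normalization convention.
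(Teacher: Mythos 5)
Your proposal is correct and is essentially the paper's own proof: the paper reaches the identical estimator $f_i(\y)=\langle g_i,\y\rangle/\|g_i\|_2^2$ by optimizing the weights $\alpha_{ji}\propto g_{ji}^2$ in a weighted average of the unbiased estimates $y_j/g_{ji}$, and computes the same conditional variance $\frac{\|\x-x_i\e_i\|_2^2}{\projdim}\cdot\frac{1}{\sum_j g_{ji}^2}$ (with entries $N(0,1/\projdim)$). The ``delicate point'' you flag at the end is real and is in fact a small flaw in the paper's last step: it asserts that the expectation over the $g_{ji}$'s is at most $\|\x-x_i\e_i\|_2^2/\projdim$, but Jensen's inequality goes the other way ($\Ex[1/Z]\ge 1/\Ex[Z]$), and since $\projdim\sum_j g_{ji}^2\sim\chi^2_{\projdim}$ the exact value is $\|\x-x_i\e_i\|_2^2/(\projdim-2)$; your remedy (normalizing columns to unit length, or accepting $\projdim-2$ in place of $\projdim$) is exactly what is needed to make the statement literally true.
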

\begin{proof}
Recall that $G \in \Re^{\projdim\times \inputdim}$ is a random Gaussian matrix
where each entry is chosen i.i.d. from $N(0,1/\projdim)$. For any $i$,
conditioned on $G_{ji}=g_{ji}$, we have that the random variable
$Y_j|G_{ji}=g_{ji}$ is distributed according to the following distribution,
\begin{align*}
(Y_j|G_{ji}=g_{ji}) &\sim g_{ji} x_i + \sum_{i' \neq i} G_{ji'}\,x_{i'}\\
&\sim g_{ji}x_i + N(0,\|\x-x_i\e_i\|_2^2/\projdim)\,.
\end{align*}
Consider a linear estimator for $x_i$:
\begin{align*}
\hat{x}_i &\eqdef
	\frac{ \sum_j \alpha_{ji} (y_j / g_{ji}) }{ \sum_j \alpha_{ji}}\,,
\end{align*}
for some non-negative $\alpha_{ji}$'s. It is easy to verify that for any
vector of $\alpha$'s, the expectation of $\hat{x}_i$, when taken over the
random choices of $G_{ji'}$ for $i'\neq i$, is $x_i$. Moreover, the variance
of $\hat{x}_i$ is
$$
	\frac{\|\x-x_i\e_i\|_2^2}{d} \,
	\frac{\sum_j (\alpha_{ji}/g_{ji})^2}{(\sum_j \alpha_{ji})^2} \,.
$$
Minimizing the variance of $\hat{x}_i$ w.r.t $\alpha_{ji}$'s gives us
$\alpha_{ji} \propto g_{ji}^2$. Indeed, the partial derivatives are,
\begin{align*}
\doh{\left(\sum_j (\alpha_{ji}/g_{ji})^2 - \lambda \sum_{j}
	\alpha_{ji}\right)}{\alpha_{ji}} &=
		\frac{2\alpha_{ji}}{g_{ji}^2} - \lambda\,,
\end{align*}
which is zero at $\alpha_{ji} = \lambda g_{ji}^2/2$. This choice of
$\alpha_{ji}$'s translates to
\begin{align*}
\Ex[(\hat{x}_i - x_i)^2] & =
	\frac{\|\x-x_i\e_i\|_2^2}{\projdim} \, \frac{1}{\sum_j g_{ji}^2} \;,
\end{align*}
which in expectation, now taken over the choices of $g_{ji}$'s, is at most
$\|\x-x_i\e_i\|_2^2/\projdim$. Thus, the claim follows.
\end{proof}

For comparison, if $\x \in B_{\inputdim,\sparsity}$, then the expected error
in estimating $x_i$ is $\sqrt{\frac{\sparsity-1}{\projdim}}$, so that taking
$\projdim = (\sparsity-1)\log\frac 1 \delta/{\eps^{2}}$ suffices to get a
error $\eps$ estimate of any fixed bit with probablity $1\!-\!\delta$. Setting
$\eps = \frac 1 2$, we can recover $x_{i}$ with probability $1\!-\!\delta$ for
$\x\in B_{\inputdim,\sparsity}$ with $d=4(\sparsity-1)\log \frac 1 \delta$. This implies Theorem~\ref{thm:boolgaussnet}.
However, note that the decoding layer is now densely connected to the input
layer. Moreover, for a $\sparsity$-sparse vectors $\x$ that are is necessarily
binary, the error grows with the ${2}$-norm of the vector $\x$, and can be
arbitrarily larger than that for the sparse sketch. Note that $G\,x$ still
contains sufficient information to recover $\x$ with error depending only on
$\|\x-head_\sparsity(\x)\|_{2}$. To our knowledge, all the known decoders are
adaptive algorithms, and we leave open the question of whether bounds
depending on the ${2}$-norm of the residual $(\x-head_\sparsity(\x))$ are
achievable by neural networks of small depth and complexity.

\end{document}